\newtheorem{theorem}{Theorem}
\newtheorem{lemma}{Lemma}
\DeclareRobustCommand\onedot{\futurelet\@let@token\@onedot}
\def\@onedot{\ifx\@let@token.\else.\null\fi\xspace}
\def\ie{\emph{i.e}\onedot}
\def\AAT{{$P \kern-2.0pt P \kern-1.0pt ^T$ }}
\def\ATA{{$P \kern-1.0pt ^T \kern-2.0pt P$ }}
\def\AATe{{X \kern-2.0pt X \kern-1.0pt ^T }}
\def\ATAe{{X \kern-1.0pt ^T \kern-2.0pt X }}
\def\pdd{{p_{\kern-2pt\lower+1pt\hbox{..}}\kern+1pt }}
\def\pdds{{p_{\kern-2pt\lower+1pt\hbox{..}}^{1/2}\kern+1pt}}
\def\sdd{{s_{\kern-2pt\lower+1pt\hbox{..}}\kern+1pt }}
\def\sdds{{s_{\kern-2pt\lower+1pt\hbox{..}}^{1/2}\kern+1pt}}
\def\pdc#1{{p_{\kern-2pt\lower+1pt\hbox{.}\lower+2pt\hbox{\footnotesize $#1$}\kern+1pt }}}
\def\sdc#1{{s_{\kern-2pt\lower+1pt\hbox{.}\lower+2pt\hbox{\footnotesize $#1$}\kern+1pt }}}
\def\prd#1{{p_{\kern-1pt\lower+2pt\hbox{\footnotesize $#1$}\lower+1pt\hbox{.}\kern+1pt }}}
\def\srd#1{{s_{\kern-1pt\lower+2pt\hbox{\footnotesize $#1$}\lower+1pt\hbox{.}\kern+1pt }}}
\def\xb{{\bf x}}
\def\vv1k{{{\bf v}^1 \cdots {\bf v}^k}}
\def\xxx1n{{{\bf x}_1 \cdots {\bf x}_n}}
\def\Dm1{{D^{-1}}}
\def\Dm12{{D^{-1/2}}}
\def\Dp12{{D^{1/2}}}
\def\Dxm12{{D_r^{-1/2}}}
\def\Dym12{{D_c^{-1/2}}}
\def\Dxp12{{D_r^{1/2}}}
\def\Dyp12{{D_c^{1/2}}}
\def\YYT{{Y \kern-0.0pt Y \kern-1.0pt ^T }}
\def\YTY{{Y \kern-1.0pt ^T \kern-0.0pt Y }}
\def\YT{{Y \kern-1.0pt ^T}}
\def\XT{{X \kern-1.0pt ^T}}
\def\XTX{{X \kern-1.0pt ^T \kern-2.0pt X }}
\def\FFTD {{F \kern-1.0pt F^T \kern-3.0pt D }}
\newcommand{\heading}[1]{{\bf \noindent #1}}
\newcommand{\Heading}[1]{\vspace{0.5\baselineskip}\heading{#1}\xspace}
\def\bbox{{\hfill {$\sqcap \kern-6.0pt \lower+2.4pt\hbox{--} \kern+2.7pt $}}}
\def\squa {{{$\sqcap \kern-6.0pt \lower+2.4pt\hbox{--} \kern+2.7pt $}}}
\def\knn{$k$-Nearest Neighbor\xspace}
\def\knna{$k$-NN\xspace}
\def\knnf{\knn (\knna)\xspace}
\newcolumntype{Z}{>{\centering\arraybackslash}X}
\newcommand{\kth}[1][k]{$#1$-th\xspace}
\newcommand{\sbr}[1]{\left\{#1\right\}}
\newcommand{\spr}[1]{\left(#1\right)}
\def\tblcap{Table~}
\newcommand{\tblref}[1]{\tblcap\ref{#1}}
\def\fgcap{Figure~}
\newcommand{\fgref}[1]{\fgcap\ref{#1}}
\def\mck{\mathcal{K}}
\def\mck{\mathcal{K}}
\newcommand{\xcol}[1][j]{\xb_{#1}}
\newcommand{\normv}[1]{\left\|#1\right\|}
\def\mn{Robust Linear Discriminant Analysis\xspace}
\def\mna{rLDA\xspace}
\def\mnf{\mn (\mna)\xspace}
\begin{document}
%
\title{Robust Linear Discriminant Analysis Using Ratio Minimization of {\LARGE   $\ell_{1,2}$}-Norms}
%
%
\author{Feiping Nie, Hua Wang, Zheng Wang, Heng Huang
\thanks{Feiping Nie and Zheng Wang are with the School of Computer Science and Center for OPTical IMagery Analysis and Learning (OPTIMAL), Northwestern Polytechnical University, Xi’an 710072, Shaanxi, P. R. China. Email: feipingnie@gmail.com, zhengwangml@gmail.com}
\thanks{Hua Wang is with the Department of Computer Science, Colorado School of Mines, USA. Email: huawangcs@gmail.com}
\thanks{Heng Huang is with the Department of Electrical and Computer Engineering, University of Pittsburgh, Pittsburgh, PA 15260 USA. Email: heng.huang@pitt.edu}
}
\maketitle

\begin{abstract}
As one of the most popular linear subspace learning methods, the Linear Discriminant Analysis (LDA) method has been widely studied in machine learning community and applied to many scientific applications. Traditional LDA minimizes the ratio of squared $\ell_2$-norms, which is sensitive to outliers.
In recent research, many $\ell_1$-norm based robust Principle Component Analysis methods were proposed to improve the robustness to outliers. However, due to the difficulty of $\ell_1$-norm ratio optimization, so far there is no existing work to utilize sparsity-inducing norms for LDA objective. In this paper, we propose a novel robust linear discriminant analysis method based on the $\ell_{1,2}$-norm ratio minimization. Minimizing the $\ell_{1,2}$-norm ratio is a much more challenging problem than the traditional methods, and there is no existing optimization algorithm to solve such non-smooth terms ratio problem.
We derive a new efficient algorithm to solve this challenging problem, and provide a theoretical analysis on the convergence of our algorithm.
The proposed algorithm is easy to implement, and converges fast in practice. Extensive experiments on both synthetic data and nine real benchmark data sets show the effectiveness of the proposed robust LDA method.
\end{abstract}


\section{Introduction}

As an important machine learning technique, Fisher linear discriminant analysis (LDA) has been successfully applied to many scientific applications in the past few years. As a subspace analysis approach to learn the low-dimensional structure of high-dimensional data, LDA seeks for a set of vectors that maximize Fisher Discriminant Criterion. LDA is designed to find a projection maximizing the class separation in a lower dimension space, \emph{i.e.} it simultaneously minimizes the within-class scatter and maximizes the between-class scatter in the projective feature vector space.

The traditional LDA uses the ratio-trace objective and has closed form solution as an eigenvalue problem. However, its solution requires the inversion of the within-class scatter matrix. Hence, a singular within-class scatter matrix results in an ill-conditioned LDA formulation. It is common to encounter rank-deficient within-class scatter matrices for high-dimensional feature spaces or for feature spaces that have highly correlated features, such as classifications for image/video, gene expression. Usually the Principal Component Analysis (PCA) is employed as pre-processing step to discard the null space of the overall scatter matrix before the LDA is used. To solve the null space problem, many variations of LDA methods have been proposed
\cite{Chu10PR,GLDAyeJMLR05,Ye08JMLR,Ye06JMLR,Ye07ICML,Choo09SDM,Kim06TKDE,Park08PR}
in machine learning communities. More recently, the trace-ratio LDA objective has been studied and shown with promising results \cite{nie2019submanifold,WangTraceRatio,traceratioTNN09}. Although the trace-ratio objective is more difficult to optimize than the ratio-trace LDA objective, it can naturally avoid the singularity problem of scatter matrix. Our work focuses on the trace-ratio objective.

It is well known that the traditional PCA and LDA use the least squares estimation objectives (based on squared $\ell_2$-norm), which are prone to the presence of outliers, because the squared large errors can dominate the sum. From a statistical point of view \cite{Robust81}, the robustness of a method is defined as the property of being insensitive to outliers. For example, in Fig.~(\ref{toy}), we plot two groups of data sampled from two different Gaussian distributions with a few of outliers. The projection directions obtained by LDA and robust LDA (will be introduced later in this paper) are calculated and visualized. Although we only have a small number of outlier data points (three outliers in each group), the projection direction of LDA (green line) has been deviated a lot from the optimal one (such as the projection direction of the proposed robust LDA -- purple line) that can separate the non-outlier data points under the projection.

In literature, several LDA methods \cite{FidlerRLDA,KimRLDA,CrouxRLDA,zhao2018new} were proposed to improve the robustness of projection subspaces by using data re-sampling and subspace searching ways. In \cite{KimRLDA}, the authors assume the class means and class covariances of data are uncertain in the binary class case, and try to find an optimal projection for the worst-case means and covariances without directly handling outliers. However, all these methods didn't replace the traditional LDA objectives by the new robust formulation, hence they didn't solve the objective function deficiency of sensitivity to outliers. It is well-known in machine learning community, to fundamentally improve the robustness of methods, the squared $\ell_2$-norm in loss functions should be changed to the correct sparsity-inducing norms. Based on this idea, many previous works have been done to improve the robustness of PCA via using the sparsity-inducing norms in the objectives \cite{Baccini96,Gao08PCA,Ke:L1PCA,R1PCA,Kwak08PAMI,Wright09NIPSRPCA,rpcaICML14}.

Although there exist some methods using the $\ell_1$-norm to improve the robustness of LDA \cite{li2015robust,ye2016l1,liu2016non,ye2016l1}, but it is very difficult to solve the optimization problems brought by the $\ell_1$-norm based objectives. Wherein, several existing $\ell_1$-norm based LDA algorithm \cite{zhong2013linear} calculate the projections one by one, which is time-consuming. Because all LDA objectives (either ratio or subtraction) have to simultaneously minimize the within-class scatter and maximize the between-class scatter, all current optimization methods in sparse learning, such as Gradient Projection, Homotopy, Iterative Shrinkage-Thresholding, Proximal Gradient, and Augmented Lagrange Multiplier methods, cannot be utilized to solve the $\ell_1$-norm based LDA objectives.

\begin{figure}[!t]
\begin{center}
\centerline{\includegraphics[height=5.5cm]{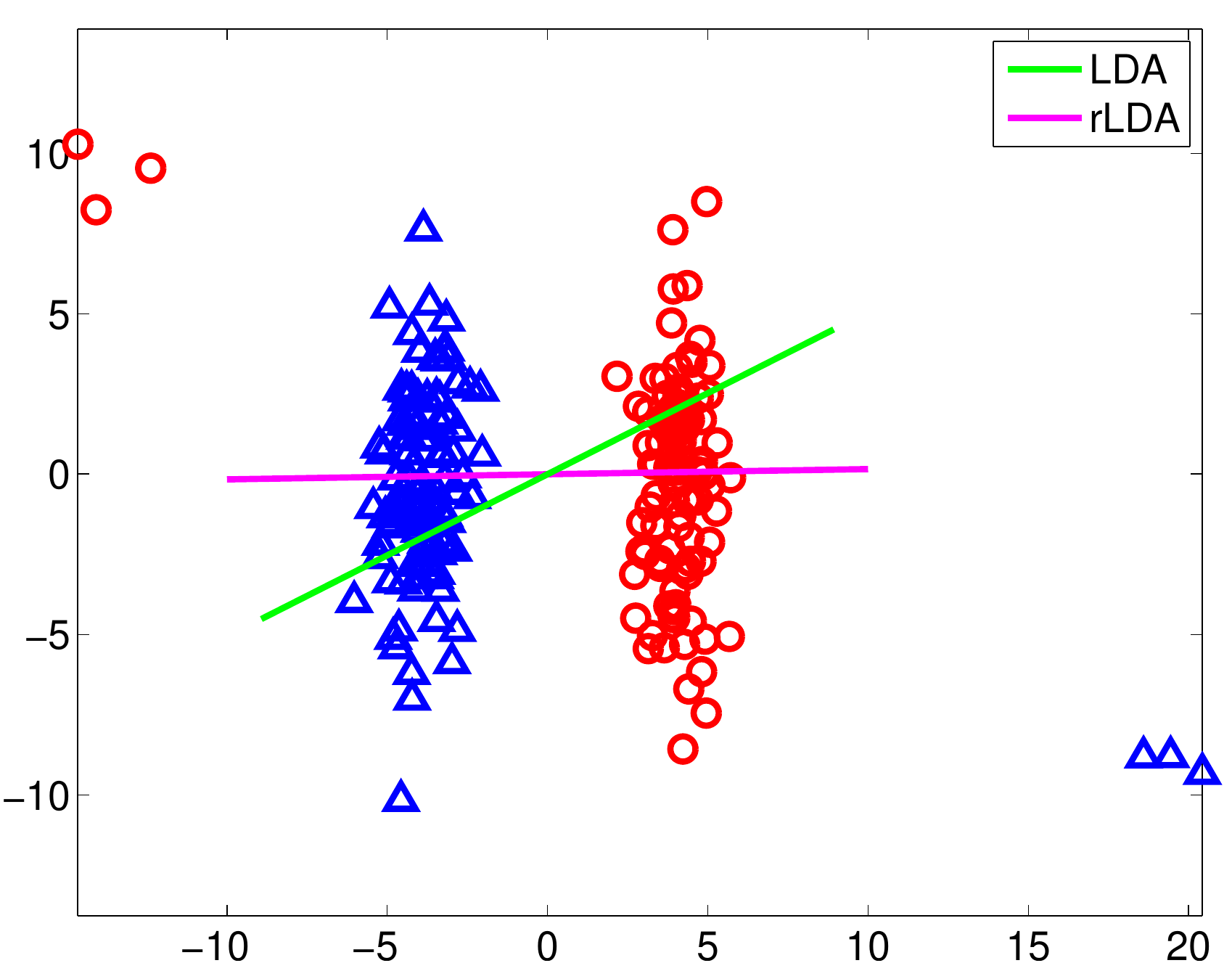}}
\caption{Two groups of data points sampled from two different Gaussian distributions are plotted with a few of outlier points. The projection directions obtained by traditional LDA and proposed robust LDA (rLDA) are calculated and visualized.}
\label{toy}
\end{center}
\end{figure}

In this paper, we propose a new robust LDA formulation that utilizes the $\ell_{1,2}$-norm in the objective to directly improve the robustness of LDA. Our new robust LDA objective imposes the $\ell_1$-norm between data points and the $\ell_2$-norm between features. The $\ell_1$-norm reduces the effect of outliers in the objective. Thus, the robustness of LDA is improved. The existing optimization algorithms cannot be applied to solve our objective. We derive a novel and efficient algorithm with rigorous theoretical analysis. In our extensive experiments, the new robust LDA outperforms other state-of-the-art related methods on the benchmark data sets with different levels of outliers.

\textbf{Notation.} Given a matrix $X \in \mathbb{R}^{d\times n}$, its $i$-th column are denoted as $x_i$.
The $\ell_{p}$-norm of a vector $v \in \mathbb{R}^n$ is defined as $||v||_p = (\sum_{i=1}^n |v_i|^p)^\frac{1}{p}$.
The $\ell_{1,2}$-norm of matrix $X$ is defined as $|| X ||_{1,2}  = \sum_j {\left\| {x_j} \right\|_2 }$. The identity matrix is denoted by $I$, the trace operator of matrix is denoted by $Tr(\cdot)$.

\section{A New Robust Formulation of Linear Discriminant Analysis}

Given the training data $X = [x_1,\cdots,x_n] \in \mathbb{R}^{d \times n}$, we denote $X_i = [x_1^i,\cdots,x_{n_i}^i] \in \mathbb{R}^{d \times n_i}(1\le i \le c)$ as the matrix of data belonging to the $i$-th class and can write $X = [X_1,\cdots,X_c]\in \mathbb{R}^{d \times n}$. LDA learns a projection matrix ${W} \in \mathbb{R}^{d\times m}$ from training data points by minimizing the distances of data points within the same class and maximizing the distances of data points between different classes simultaneously:
\begin{equation}
\label{optr0}
\mathop {\max }\limits_{W^T W = I} \frac{{\sum\limits_{i = 1}^c {\left\| {W^T (\mu_i  - \mu)} \right\|_2^2 } }}{{\sum\limits_{i = 1}^c {\sum\limits_{j = 1}^{n_i } {\left\| {W^T (x_j^i  - \mu_i )} \right\|_2^2 } } }},
\end{equation}
where $\mu_i  = \sum_{j = 1}^{n_i } {x_j^i }$ is the mean of data points in class $i$, and $\mu = \sum_{i = 1}^n {x_i }$ is the mean of all training data points.
Eq.~(\ref{optr0}) can be written as the matrix form as
\begin{equation}
\label{optr}
\mathop {\max }\limits_{W^T W = I} \frac{{Tr(W^T S_b W)}}{{Tr(W^T S_w W)}},
\end{equation}
where $S_w$ and $S_b$ are the within-class scatter matrix and between-class scatter matrix, and defined as follows:
\begin{equation}
S_w  = \sum\limits_{i = 1}^c {\sum\limits_{j = 1}^{n_i }{(x_j^i  - \mu_i )(x_j^i  - \mu_i )^T } },
\end{equation}
\begin{equation}
S_b  = \sum\limits_{i = 1}^c n_i {(\mu_i  - \mu)(\mu_i  - \mu)^T }.
\end{equation}

The trace-ratio problem in Eq.~(\ref{optr}) is somewhat difficult to be solved, traditional LDA turns to solve the following simpler ratio-trace problem:
\begin{equation}
\label{oprt}
  \mathop {\max }\limits_{W^T W = I} Tr\left(\frac{{{W^T S_b W}}}{{{W^T S_w W}}}\right).
\end{equation}
where $\frac{{A}}{{B}}$ denotes $B^{-1}A$ for simplicity.

The problem in Eq.~(\ref{oprt}) has a closed form solution, \emph{i.e.}, the $m$ eigenvectors of $S_w^{-1}S_b$ corresponding to the $m$ largest eigenvalues. Because of the rank deficiency problem of within-class scatter matrix $S_w$ for high-dimensional feature spaces or for feature spaces that have highly correlated features, researchers usually use the pseudo inverse $S_w^{+}S_b$ or discard the null space by other subspace method (\emph{e.g.} PCA) first. However, this problem doesn't exist in the trace-ratio LDA objective defined in Eq.~(\ref{optr}).
Recently, several efficient iterative algorithms were proposed to directly solve the problem in Eq.~(\ref{optr}), and have shown promising performance compared to traditional LDA \cite{WangTraceRatio,traceratioTNN09}. Thus, in this paper, we focus on the trace-ratio problem as Eq.~(\ref{optr}).

The objective function in Eq.~(\ref{optr}) uses the \textbf{squared} $\ell_2$-norms. It is widely recognized that the objective using squared $\ell_2$-norm is sensitive to the outliers. In recent sparse learning and compressive sensing techniques, the $\ell_1$-norm has been widely studied and applied to replace the squared $\ell_2$-norm in many traditional methods, such as PCA \cite{Baccini96,Gao08PCA,Ke:L1PCA,R1PCA,Kwak08PAMI,Wright09NIPSRPCA}. However, the robust formulation of LDA is not straightforward and is difficult to optimize. So far, there is no principled $\ell_1$-norm based LDA method.

In general, when the objective is to minimize the data distance or error loss, the $\ell_1$-norm objective is more robust than the squared $\ell_2$-norm objective.
Therefore, to impose the robustness on LDA, it is better to reformulate Eq.~(\ref{optr}) as a distance minimization objective.

Using the total scatter matrix $S_t  = \sum\limits_{i = 1}^n {(x_i  - \mu)(x_i  - \mu)^T }$, and with the relationship $S_t=S_w+S_b$, Eq.~(\ref{optr}) is equivalent to:
\begin{equation}
\mathop {\min }\limits_{W^T W = I} \frac{{Tr(W^T S_w W)}}{{Tr(W^T S_t W)}},
\end{equation}
which can be written as:
\begin{equation}
\label{optr1}
\mathop {\min }\limits_{W^T W = I} \frac{{\sum\limits_{i = 1}^c {\sum\limits_{j = 1}^{n_i } {\left\| {W^T (x_j^i  - \mu_i )} \right\|_2^2 } } }}{{\sum\limits_{i = 1}^n {\left\| {W^T (x_i  - \mu)} \right\|_2^2 } }}.
\end{equation}

Without loss of generality, we assume the data are centered, that is, $\mu=0$. The problem in Eq.~(\ref{optr1}) can be simplified as:
\begin{equation}
\label{optr2}
\mathop {\min }\limits_{W^T W = I} \frac{{\sum\limits_{i = 1}^c {\sum\limits_{j = 1}^{n_i } {\left\| {W^T (x_j^i  - \mu_i )} \right\|_2^2 } } }}{{\sum\limits_{i = 1}^n {\left\| {W^T x_i } \right\|_2^2 } }}.
\end{equation}

Now we still cannot simply replace the squared $\ell_2$-norm terms by $\ell_1$-norm terms in Eq.~(\ref{optr1}) or Eq.~(\ref{optr2}) to improve the robustness. Although the numerator of Eq.~(\ref{optr2}) minimizes the projection distance, the denominator of Eq.~(\ref{optr2}) still maximizes the projection distance. Thus, we have to reformulate the denominator of Eq.~(\ref{optr2}). Notice the constraint $W^T W = I$ in the problem, so
\begin{equation}
\sum\limits_{i = 1}^n {\left\| {W^T x_i } \right\|_2^2  = \sum\limits_{i = 1}^n {\left\| {x_i } \right\|_2^2 }  - \sum\limits_{i = 1}^n {\left\| {x_i  - WW^T x_i } \right\|_2^2 } },
\end{equation}
and Eq.~(\ref{optr2}) is equivalent to the following problem:
\begin{equation}
\label{optr3}
\mathop {\min }\limits_{W^T W = I} \frac{{\sum\limits_{i = 1}^c {\sum\limits_{j = 1}^{n_i } {\left\| {W^T (x_j^i  - \mu_i )} \right\|_2^2 } } }}{{\sum\limits_{i = 1}^n {\left\| {x_i } \right\|_2^2 }  - \sum\limits_{i = 1}^n {\left\| {x_i  - WW^T x_i } \right\|_2^2 } }}.
\end{equation}

Note that given the training data, $\sum_{i = 1}^n {\left\| {x_i } \right\|_2^2 }$ is a constant. $\sum_{i = 1}^n {\left\| {x_i  - WW^T x_i } \right\|_2^2 }$ is the reconstruction error.
Eq.~(\ref{optr3}) minimizes the projection distance in the numerator, and minimizes reconstruction error in the denominator. Thus, we can replace the squared $\ell_2$-norms in both of them by
$\ell_{1,2}$-norms. To be consistent, we also replace the norm in $\sum_{i = 1}^n {\left\| {x_i } \right\|_2^2 }$. Because we want to
reduce the effect of data outliers (not features) in LDA calculation, instead of using $\ell_1$-norm in objective, we use the $\ell_{1,2}$-norm, which was also used for robust principal component analysis \cite{R1PCA,rpcaICML14}. Such $\ell_{1,2}$-norm improves the robustness via using the $\ell_1$-norm between data points. The effects of outliers are reduced by the $\ell_1$-norm.  
Thus, we propose to solve the following optimization problem:
\begin{equation}
\mathop {\min }\limits_{W^T W = I} \frac{{\sum\limits_{i = 1}^c {\sum\limits_{j = 1}^{n_i } {\left\| {W^T (x_j^i  - \mu_i )} \right\|_2 } } }}{{\sum\limits_{i = 1}^n {\left\| {x_i } \right\|_2 }  - \sum\limits_{i = 1}^n {\left\| {x_i  - WW^T x_i } \right\|_2 } }},
\end{equation}
which can be written as the matrix form as:
\begin{equation}
\label{oprLDA0}
  \mathop {\min }\limits_{W^T W = I} \frac{{\sum\limits_{i = 1}^c {\left\| {W^T (X_i  - \mu_i\mathbf{1}_i^T )} \right\|_{1,2} } }}{{\left\| X \right\|_{1,2}  - \left\| {X - WW^T X} \right\|_{1,2} }}.
\end{equation}
where $\mathbf{1}_i$ is a $n_i$-dimensional vector with all elements as 1.

Further, note that $\mu_i  = \sum_{j = 1}^{n_i } {x_j^i }$ is the optimal mean under the squared $\ell_2$-norm, but is not the optimal mean under the $\ell_{1,2}$-norm. Therefore, in this paper, we also optimize the mean $\mu_i|_{i=1}^c$ for each class. The optimization problem (\ref{oprLDA0}) becomes:
\begin{equation}
\label{oprLDA}
  \mathop {\min }\limits_{W^T W = I, \mu_i|_{i=1}^c} \frac{{\sum\limits_{i = 1}^c {\left\| {W^T (X_i  - \mu_i\mathbf{1}_i^T )} \right\|_{1,2} } }}{{\left\| X \right\|_{1,2}  - \left\| {X - WW^T X} \right\|_{1,2} }}.
\end{equation}

Note that the $\ell_{1,2}$-norm is not a smooth function and the problem in Eq.~(\ref{oprLDA}) is highly non-convex. It is mentioned before that minimizing the ratio of the smooth squared $\ell_2$-norm function in Eq.~(\ref{optr1}) is not an easy problem, thus solving the problem in Eq.~(\ref{oprLDA}) is much more challenging. Because we should minimize the ratio of the non-smooth $\ell_{1,2}$-norm function, the existing $\ell_1$ minimization algorithms, such as Gradient Projection, Homotopy, Iterative Shrinkage-Thresholding, Proximal Gradient, and Augmented Lagrange Multiplier methods, cannot work here. To solve this difficult problem, in next section, we will propose a novel and efficient algorithm to optimize the ratio of $\ell_{1,2}$-norm formulations. Meanwhile, the convergence of our algorithm is theoretically guaranteed.

\section{Optimization Algorithm}

\subsection{Algorithm to A General Problem}

Before solving the problem in Eq.~(\ref{oprLDA}), let's consider a more general problem as follows:
\begin{equation}\label{opfp}
      \min_{{v}\in \mathcal{C}}\;
      \frac{f({v})}
      {g({v}) } \;,
\end{equation}
We suppose the problem (\ref{opfp}) is lower bounded. The algorithm to solve Eq.(\ref{opfp}) is described in Algorithm~\ref{alg}. In the following we will prove the algorithm converges to the globally optimal solution to the problem (\ref{opfp}), and the convergence rate is quadratic.

\begin{algorithm}[!t]
\caption{The algorithm to solve the problem (\ref{opfp}).}
\label{alg}
\begin{algorithmic}
\STATE Initialize ${v} \in \mathcal{C}$.
\REPEAT
\STATE \textbf{1.} Calculate $\lambda = \frac{f(v)} {g(v)}$.
\STATE \textbf{2.} Update $v$ by solving the following problem:
 \begin{equation}\label{opdiff}
 {v} = \arg \min_{{v}\in \mathcal{C}} f({v}) - \lambda g({v})
 \end{equation}
\UNTIL {Converges}
\end{algorithmic}
\end{algorithm}

\begin{theorem}\label{prop0}
  Algorithm~\ref{alg} decreases the objective value of the problem (\ref{opfp}) in each iteration.
\end{theorem}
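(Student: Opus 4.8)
The plan is the standard Dinkelbach-type monotonicity argument for fractional minimization. Index the iterations by $t$: let $v_t$ be the iterate at the start of iteration $t$, let $\lambda_t = f(v_t)/g(v_t)$ be the scalar computed in Step 1, and let $v_{t+1}$ be the point returned by Step 2, i.e. a global minimizer over $\mathcal{C}$ of the auxiliary objective $h_t(v) := f(v) - \lambda_t\, g(v)$. First I would record the elementary identity $h_t(v_t) = f(v_t) - \frac{f(v_t)}{g(v_t)}\, g(v_t) = 0$, so that the current iterate already certifies $\min_{v \in \mathcal{C}} h_t(v) \le 0$.

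Next, since $v_{t+1}$ attains that minimum and $v_t$ is feasible for the subproblem (\ref{opdiff}), I would write $h_t(v_{t+1}) \le h_t(v_t) = 0$, that is $f(v_{t+1}) \le \lambda_t\, g(v_{t+1})$. (Only this last inequality is used, so it in fact suffices that Step 2 returns any point whose subproblem value is no larger than at $v_t$.) Using that the denominator is positive on $\mathcal{C}$, I would divide through by $g(v_{t+1}) > 0$ to conclude
$$\lambda_{t+1} \;=\; \frac{f(v_{t+1})}{g(v_{t+1})} \;\le\; \lambda_t \;=\; \frac{f(v_t)}{g(v_t)},$$
which is exactly the claimed non-increase of the objective of (\ref{opfp}) across one iteration; the inequality is strict unless $v_t$ already minimizes $h_t$, i.e. until the algorithm stabilizes, and combined with the assumed lower boundedness this gives a convergent, monotonically non-increasing sequence $\{\lambda_t\}$.

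I do not anticipate a genuine difficulty here; the only point that needs care is the sign of $g$, since the final division reverses the inequality if $g$ were negative. I would therefore either carry the blanket hypothesis $g(v) > 0$ on $\mathcal{C}$ explicitly, or verify it directly for the concrete LDA denominator $\|X\|_{1,2} - \|X - WW^T X\|_{1,2}$: because $W^T W = I$, the matrix $WW^T$ is an orthogonal projection, so $\|x_i - WW^T x_i\|_2 \le \|x_i\|_2$ for every column $x_i$, and summing over $i$ gives $\|X - WW^T X\|_{1,2} \le \|X\|_{1,2}$, with strict inequality as long as $W$ does not annihilate all the data. That projection bound is the only substantive ingredient beyond the three-line Dinkelbach manipulation above.
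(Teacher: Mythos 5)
Your argument is exactly the paper's proof: note $f(v)-\lambda g(v)=0$ at the current iterate, deduce $f(\tilde v)-\lambda g(\tilde v)\le 0$ from optimality of the subproblem, and divide by the denominator to get the decrease of the ratio. Your extra care about requiring $g>0$ (rather than the paper's stated $g(\tilde v)\ge 0$) for the final division, verified via the orthogonal-projection bound that is the paper's Lemma~\ref{lem1}, is a worthwhile tightening but does not change the route.
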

\begin{proof}
    In each iteration of Algorithm~\ref{alg}, suppose the updated $v$ in step 2 is $\tilde v$. Then we have $f({\tilde v}) - \lambda g({\tilde v}) \le f({v}) - \lambda g({v})$. According to step 1, we know  $f({v}) - \lambda g({v})=0$. Thus $f({\tilde v}) - \lambda g({\tilde v}) \le 0$, which indicates $\frac{f({\tilde v})}{g({\tilde v})} \le \lambda = \frac{f({v})}{g({v})}$ as $g(\tilde v)\ge 0$.
\end{proof}

Since the problem (\ref{opfp}) is lower bounded, the Algorithm~\ref{alg} will converge according to Theorem \ref{prop0}. The following two theorems reveal
that the Algorithm~\ref{alg} will converge to the globally optimal solution with quadratic convergence rate.

\begin{theorem}\label{theorem:equiv}
  The converged $v$ in Algorithm~\ref{alg} is the globally optimal solution to the problem (\ref{opfp}).
\end{theorem}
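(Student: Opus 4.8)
The plan is to exploit the fixed-point characterization of the converged iterate together with the defining property of Step~2, in the spirit of Dinkelbach's method for linear-fractional programming. Write $\lambda^* = \min_{u\in\mathcal{C}} f(u)/g(u)$ for the (finite) optimal value of problem~(\ref{opfp}); I take as standing assumptions that $g>0$ on $\mathcal{C}$ (so the ratio is well defined and $\lambda^*$ makes sense) and that the inner subproblem~(\ref{opdiff}) attains its minimum, both of which are implicit in the setup. Let $\hat v$ denote the converged iterate and $\hat\lambda = f(\hat v)/g(\hat v)$ the value computed from it in Step~1. Since $\hat v$ is a fixed point of the update, running Step~2 with $\lambda=\hat\lambda$ returns $\hat v$, i.e. $\hat v \in \argmin_{u\in\mathcal{C}}\bigl(f(u)-\hat\lambda\,g(u)\bigr)$.

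The first step is to evaluate the value of this minimum. By the choice of $\hat\lambda$ in Step~1,
\begin{equation}
f(\hat v) - \hat\lambda\, g(\hat v) = f(\hat v) - \frac{f(\hat v)}{g(\hat v)}\,g(\hat v) = 0 ,
\end{equation}
and since $\hat v$ attains the minimum of $f(u)-\hat\lambda\,g(u)$ over $\mathcal{C}$, this minimum value is $0$. Hence $f(u) - \hat\lambda\, g(u) \ge 0$ for every $u\in\mathcal{C}$.

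The second step converts this into global optimality. Dividing the last inequality by $g(u)>0$ gives $f(u)/g(u) \ge \hat\lambda$ for all $u\in\mathcal{C}$, so $\hat\lambda$ is a lower bound for the objective of~(\ref{opfp}); moreover it is attained, namely at $\hat v$. Therefore $\hat\lambda = \lambda^*$ and $\hat v$ is a global minimizer. Equivalently, one may introduce $h(\lambda)=\min_{u\in\mathcal{C}}\bigl(f(u)-\lambda g(u)\bigr)$, which is non-increasing in $\lambda$ because $g\ge 0$ and satisfies $h(\lambda)=0 \Leftrightarrow \lambda=\lambda^*$; the computation above is exactly the statement $h(\hat\lambda)=0$.

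The argument itself is short; the step that needs care is not a calculation but the bookkeeping of hypotheses — confirming that the subproblem~(\ref{opdiff}) is solved exactly (so that ``the minimum equals $f(\hat v)-\hat\lambda g(\hat v)$'' is legitimate) and that $g$ stays strictly positive on $\mathcal{C}$ (so that the division is valid and $\lambda^*$ is well posed; mere nonnegativity $g\ge 0$, used already in Theorem~\ref{prop0}, is not enough). I would therefore state these two conditions explicitly at the outset. Note that convergence of the iteration is not needed for this theorem, since the claim concerns the already-converged $v$; it is supplied separately by Theorem~\ref{prop0} together with the lower-boundedness of~(\ref{opfp}).
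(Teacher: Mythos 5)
Your proof is correct and follows essentially the same route as the paper's: use the fixed-point property of the converged iterate to show the subproblem's minimum value is zero, deduce $f(u)-\hat\lambda g(u)\ge 0$ for all $u\in\mathcal{C}$, and divide by $g$ to conclude $\hat\lambda$ is the global optimum. Your explicit remark that strict positivity $g>0$ (rather than the paper's stated $g\ge 0$) is needed for the division is a fair and worthwhile tightening of the hypotheses, but it does not change the argument.
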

\begin{proof}
    Suppose ${v}^*$ is the converged solution $v$ in Algorithm~\ref{alg}, and $\lambda^*$ is the converged objective value. According to step 1, the following holds:
    \begin{equation}\label{}
    \frac{f({v}^*)}
      {g({v}^*) } = \lambda^*.
    \end{equation}
    Thus according to step 2, we have, $\forall\; {v} \in \mathcal{C}$,
    \begin{equation}\label{vl0}
    f({v}) - \lambda^* g({v}) \ge f({v^*}) - \lambda^* g({v^*}) = 0.
    \end{equation}
    Note that $g(v)\ge 0$, Eq.(\ref{vl0}) means $\forall\; {v} \in \mathcal{C}$, $\frac{f({v})}
      {g({v}) } \ge \lambda^*$. That is to say,
    \begin{equation}\label{}
    \frac{f({v}^*)}
      {g({v}^*) } = \lambda^* = \min_{{v}\in \mathcal{C}}\;
      \frac{f({v})}
      {g({v}) }.
    \end{equation}
    So ${v}^*$ is the globally optimal solution to the problem (\ref{opfp}).
\end{proof}

\begin{theorem}\label{prop1}
  The convergence rate of Algorithm~\ref{alg} is quadratic.
\end{theorem}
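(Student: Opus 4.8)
The plan is to recognize Algorithm~\ref{alg} as Newton's method applied to an auxiliary one-dimensional equation, and then invoke the classical quadratic convergence of the Newton iteration. Define $h(\lambda) = \min_{v \in \mathcal{C}} \big( f(v) - \lambda g(v) \big)$. For each fixed $v$ the map $\lambda \mapsto f(v) - \lambda g(v)$ is affine in $\lambda$ with slope $-g(v) \le 0$, so $h$, being a pointwise minimum of affine functions, is concave and non-increasing; under the standing assumption $g(v) > 0$ it is in fact strictly decreasing. Moreover $h(\lambda) = 0$ holds exactly when $f(v) - \lambda g(v) \ge 0$ for every $v \in \mathcal{C}$ with equality attained somewhere, i.e.\ exactly when $\lambda = \min_{v\in\mathcal{C}} f(v)/g(v) = \lambda^*$; hence $\lambda^*$ is the unique root of $h$. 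This is essentially the fact already exploited in the proof of Theorem~\ref{theorem:equiv}.

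Next I would identify one step of Algorithm~\ref{alg} with one Newton step for the scalar equation $h(\lambda) = 0$. Writing $\lambda_k = f(v_k)/g(v_k)$ for the current value and letting $v_{k+1}$ be the minimizer produced by Eq.~(\ref{opdiff}), the affine function $\ell_k(\lambda) = f(v_{k+1}) - \lambda g(v_{k+1})$ satisfies $\ell_k(\lambda) \ge h(\lambda)$ for all $\lambda$ and $\ell_k(\lambda_k) = h(\lambda_k)$, so $\ell_k$ is the supporting line of the concave function $h$ at $\lambda_k$, with slope $h'(\lambda_k) = -g(v_{k+1})$ (this uses that the minimizer is attained, via the usual envelope/Danskin argument for differentiability of $h$). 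The updated value $\lambda_{k+1} = f(v_{k+1})/g(v_{k+1})$ is precisely the zero of $\ell_k$, i.e.\ $\lambda_{k+1} = \lambda_k - h(\lambda_k)/h'(\lambda_k)$, which is exactly the Newton update for $h(\lambda)=0$.

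The last step is the standard error analysis. Assuming $h$ is twice continuously differentiable near $\lambda^*$ (which for problem (\ref{opfp}) follows from smoothness of $f$ and $g$ together with uniqueness/stability of the minimizer), Taylor expansion of $h$ about $\lambda_k$ combined with $h(\lambda^*) = 0$ gives $\lambda_{k+1} - \lambda^* = \tfrac{h''(\eta)}{2 h'(\lambda_k)}(\lambda_k - \lambda^*)^2$ for some $\eta$ between $\lambda_k$ and $\lambda^*$. Since $h'(\lambda^*) = -g(v^*) < 0$, the quantity $|h'|$ is bounded below by a positive constant near $\lambda^*$, while $|h''|$ is bounded above there; together with the monotone convergence $\lambda_k \to \lambda^*$ guaranteed by Theorems~\ref{prop0} and~\ref{theorem:equiv}, this yields $|\lambda_{k+1} - \lambda^*| \le C\,|\lambda_k - \lambda^*|^2$ for all large $k$, which is quadratic convergence.

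The main obstacle is the regularity invoked in the final step: the clean quadratic bound needs $h$ to be $C^2$ near $\lambda^*$ with $h'(\lambda^*) \ne 0$. The nonvanishing of $h'(\lambda^*)$ is automatic because $h'(\lambda^*) = -g(v^*) < 0$, but $C^2$-ness is delicate, since $h$ is defined through a minimization and is only guaranteed concave; it must be derived from smoothness of $f$ and $g$ and from regularity of the argmin map $\lambda \mapsto v(\lambda)$, or else the rate stated under such a hypothesis. Everything else follows routinely once the Newton-iteration interpretation is established.
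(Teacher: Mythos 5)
Your proposal is correct and follows exactly the paper's route: define $h(\lambda)=\min_{v\in\mathcal{C}} f(v)-\lambda g(v)$, observe that step 1 followed by step 2 is precisely the Newton update $\lambda \mapsto \lambda - h(\lambda)/h'(\lambda)$ with $h'(\lambda)=-g(\tilde v)$, and conclude quadratic convergence from the classical theory. You are in fact more careful than the paper's own proof, which stops at the Newton identification and never verifies the hypotheses of the classical rate theorem; your final paragraph correctly isolates the real issue — $h$ is only guaranteed concave with supporting-line derivative $-g(\tilde v)$, and the $C^2$ regularity near $\lambda^*$ needed for the Taylor-expansion bound $|\lambda_{k+1}-\lambda^*|\le C|\lambda_k-\lambda^*|^2$ is neither proved in the paper nor automatic for the non-smooth $\ell_{1,2}$-norm instance, so the quadratic rate should be read as holding under that additional regularity assumption.
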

\begin{proof}
Define a function as follows:
\begin{equation}\label{}
h(\lambda) = \min_{{v}\in \mathcal{C}}\; f({v}) - \lambda g({v}).
\end{equation}
According to Algorithm~\ref{alg} we know, the converged ${\lambda}^*$ is the root of $h(\lambda)$, that is, $h(\lambda^*)=0$.

In each iteration of Algorithm~\ref{alg}, suppose the updated $v$ is $\tilde v$ in step 2.
According to step 2, $h(\lambda) = f({\tilde v}) - \lambda g({\tilde v})$. Thus $h'(\lambda) = -g({\tilde v})$.

In Newton's method, the updated solution should be $\tilde \lambda = \lambda - \frac{h(\lambda)}{h'(\lambda)} = \lambda - \frac{f({\tilde v}) - \lambda g({\tilde v})}{-g({\tilde v})} = \frac{f({\tilde v} )} {g({\tilde v} )}$, which is exactly the step 1 in the next iteration in Algorithm~\ref{alg}. Namely, Algorithm~\ref{alg} is a Newton's method to find the root of function $h(\lambda)$, and thus the convergence
rate is quadratic.
\end{proof}
Theorem~\ref{prop1} indicates that Algorithm~\ref{alg} converges very fast, \ie, the difference between the current objective value and the optimal objective value is smaller than $\frac{1}{c^{2^t}}$($c>1$ is a certain constant) at the \kth[t] iteration.

From the above analysis we know, if we can find the globally optimal solution to the problem (\ref{opdiff}) in step 2 of Algorithm~\ref{alg}, we can find the globally optimal solution to the problem (\ref{opfp}) by Algorithm~\ref{alg}. In some cases, it is difficult to find the globally optimal solution to the problem (\ref{opdiff}).

Before analyzing the property of Algorithm~\ref{alg} in these cases, let's recall the method of Lagrange multipliers for the following constrained problem:
\begin{equation}\label{opf}
      \min_{{v}\in \mathcal{C}}\; f(v).
\end{equation}
First, suppose the constraint ${v}\in \mathcal{C}$ can be written as $p(v)\le 0$ and $q(v)=0$, where $p(v)$ and $q(v)$ are vector output function to encode multiple
constraints. Thus the problem (\ref{opf}) is equivalent to
\begin{equation}\label{opf1}
\mathop {\min }\limits_v \mathop {\max }\limits_{\alpha  \ge 0,\beta } f(v) + \alpha ^T p(v) + \beta ^T q(v),
\end{equation}
and the Lagrange function of problem (\ref{opf}) is defined as $\mathcal{L}(v,\alpha ,\beta ) = f(v) + \alpha ^T p(v) + \beta ^T q(v)$, where $(\alpha ,\beta)$ is called the Lagrange multipliers.
The necessary condition for $v$ being an optimal solution to the problem (\ref{opf}) is that, there \textbf{exists} a $(\alpha ,\beta)$ such that $(v, \alpha ,\beta)$ is a stationary point to the Lagrange function $\mathcal{L}(v,\alpha ,\beta )$.

\begin{theorem}\label{ratiolocal}
  If the updated $v$ in step 2 of Algorithm~\ref{alg} is a stationary point of the problem (\ref{opdiff}), the converged solution in Algorithm~\ref{alg} is a stationary point of problem (\ref{opfp}).
\end{theorem}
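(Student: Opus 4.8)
The plan is to transfer Lagrange multipliers from the step-2 subproblem to the ratio problem (\ref{opfp}), using the quotient rule to link the gradient of $f/g$ at the fixed point with the gradient of $f-\lambda^* g$. Both problems are optimized over the \emph{same} feasible set $\mathcal{C}$, which I write (as in the discussion preceding the theorem) as $p(v)\le 0$, $q(v)=0$; this shared constraint structure is the key fact that lets the two Lagrange functions share their constraint terms.

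First I would set up the situation at convergence: let $v^*$ be the converged iterate and $\lambda^* = f(v^*)/g(v^*)$ the converged objective value, and note that $g(v^*)>0$ since the ratio is finite and $g\ge 0$ throughout (as already used in Theorems \ref{prop0} and \ref{theorem:equiv}). At the fixed point, step 2 amounts to $\min_{v\in\mathcal{C}} f(v)-\lambda^* g(v)$, and by hypothesis $v^*$ is a stationary point of it, so by the Lagrange-multiplier characterization recalled just above there exist multipliers $(\alpha^*,\beta^*)$ with $\alpha^*\ge 0$ making $(v^*,\alpha^*,\beta^*)$ a stationary point of $f(v)-\lambda^* g(v)+\alpha^T p(v)+\beta^T q(v)$; equivalently, the gradient identity $\nabla f(v^*)-\lambda^*\nabla g(v^*)+\sum_k \alpha_k^*\nabla p_k(v^*)+\sum_l \beta_l^*\nabla q_l(v^*)=0$ holds, together with primal feasibility $p(v^*)\le 0$, $q(v^*)=0$ and complementary slackness $\alpha_k^* p_k(v^*)=0$.

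Next I would differentiate the ratio. By the quotient rule, at any feasible $v$ with $g(v)>0$ one has $\nabla (f/g)(v) = \frac{1}{g(v)}\bigl(\nabla f(v) - \frac{f(v)}{g(v)}\nabla g(v)\bigr)$, so at $v^*$ this is exactly $\frac{1}{g(v^*)}(\nabla f(v^*)-\lambda^*\nabla g(v^*))$. Substituting the stationarity identity above and dividing the multipliers by $g(v^*)$, I set $\tilde\alpha_k = \alpha_k^*/g(v^*)$ and $\tilde\beta_l = \beta_l^*/g(v^*)$; then $\nabla (f/g)(v^*)+\sum_k \tilde\alpha_k \nabla p_k(v^*)+\sum_l \tilde\beta_l \nabla q_l(v^*)=0$, while $\tilde\alpha\ge 0$ (since $g(v^*)>0$), $\tilde\alpha_k p_k(v^*)=0$, and $v^*$ remains primal feasible. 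Thus $(v^*,\tilde\alpha,\tilde\beta)$ is a stationary point of the Lagrange function of (\ref{opfp}), which is precisely the necessary condition recalled above for $v^*$ to be a stationary point of (\ref{opfp}).

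The substantive content is this rescaling-of-multipliers step; I expect the only thing needing care is the bookkeeping around $g(v^*)>0$ — one must record that it is \emph{strictly} positive so that division is legitimate and the sign of the inequality multipliers together with complementary slackness is preserved — plus the observation that the constraint data $p,q$ and their gradients at $v^*$ are literally the same in the step-2 subproblem and in (\ref{opfp}). Everything else is the quotient rule and relabeling. For full rigor one could additionally remark that the same argument goes through verbatim when $\mathcal{C}$ is unconstrained or encoded by only a subset of these constraint types, since the absent terms carry zero multipliers.
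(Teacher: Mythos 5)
Your proof is correct and follows essentially the same route as the paper: take the Lagrange stationarity condition of the step-2 subproblem at $\lambda^*=f(v^*)/g(v^*)$, apply the quotient rule to recognize $\nabla(f/g)(v^*)$, and rescale the multipliers by $1/g(v^*)$ while preserving their sign. You are in fact slightly more careful than the paper on one point — you note that $g(v^*)$ must be \emph{strictly} positive for the division to be legitimate, whereas the paper only invokes $g(v^*)\ge 0$ — and your explicit tracking of complementary slackness is a harmless refinement of the same argument.
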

\begin{proof}
The Lagrange function of problem (\ref{opdiff}) is
\begin{equation}\label{}
\mathcal{L}_1 (v,\alpha _1 ,\beta _1 ) = f(v) - \lambda g(v) + \alpha _1^T p(v) + \beta _1^T q(v).
\end{equation}
Suppose the converged solution in Algorithm~\ref{alg} is ${v}^*$. If ${v}^*$ is a stationary point of the problem (\ref{opdiff}), we have
\begin{equation}\label{}
f'(v^{\rm{*}} ) - \frac{{f(v^* )}}{{g(v^* )}}g'(v^* ) + \alpha _1^T p'(v^* ) + \beta _1^T q'(v^* ) = 0,
\end{equation}
which can be written as
\[
\frac{{g(v^* )f'(v^* ) - f(v^* )g'(v^* )}}{{g^2 (v^* )}} + \frac{{\alpha _1^T }}{{g(v^* )}}p'(v^* ) + \frac{{\beta _1^T }}{{g(v^* )}}q'(v^* ) = 0.
\]
Let $\alpha _2  = \frac{{\alpha _1 }}{{g(v^* )}}$ and $\beta _2  = \frac{{\beta _1 }}{{g(v^* )}}$, then we have
\begin{equation}\label{}
\left. {\left( {\frac{{f(v)}}{{g(v)}}} \right)^\prime  } \right|_{v = v^* }  + \alpha _2^T p'(v^* ) + \beta _2^T q'(v^* ) = 0.
\end{equation}
Note that $\alpha_1\ge0$ and $g(v^* ) \ge 0$, so $\alpha_2\ge0$. Therefore, ${v}^*$ is a stationary point to the Lagrange function of problem (\ref{opfp}) as follows:
\begin{equation}\label{}
\mathcal{L}_2 (v,\alpha_2 ,\beta_2 ) = \frac{{f(v)}}{{g(v)}} + \alpha _2^T p(v) + \beta _2^T q(v),
\end{equation}
which completes the proof.
\end{proof}

\begin{algorithm}[!t]
\caption{An efficient iterative algorithm to solve the rLDA optimization problem (\ref{oprLDA}).}
\label{alg1}
\begin{algorithmic}
\STATE {\bfseries Input:} $X=[x_1,x_2,\cdots,x_n] = [X_1,X_2,\cdots,X_c] \in \mathbb{R}^{d \times n}$, where $X_i = [x_1^i,x_2^i,\cdots,x_{n_i}^i] \in \mathbb{R}^{d \times n_i}(1\le i \le c)$ is the data matrix belongs to the $i$-th class, $X$ is a centered data matrix with zero mean.
\STATE {\bfseries Output:} The projection matrix ${W} \in \mathbb{R}^{d\times m}$.
\STATE Initialize ${W} \in \mathbb{R}^{d \times m}$ such that $W^T W = I$. \\ Initialize $\mu_i  = \sum_{j = 1}^{n_i } {x_j^i }$ for each $i$.
\REPEAT
\STATE \textbf{1.} Calculate $\lambda  = \frac{{\sum\limits_{i = 1}^c {\left\| {W^T (X_i  - \mu_i\mathbf{1}_i^T )} \right\|_{1,2} } }}{{\left\| X \right\|_{1,2}  - \left\| {X - W W^T X} \right\|_{1,2} }}$.
\STATE \textbf{2.} Calculate the diagonal matrix ${D}_i(1\le i \le c) \in \mathbb{R}^{n_i \times n_i}$, where the $j$-th diagonal element is $\frac{1}{{2\left\| {W^T (x_j^i  - \mu_i) } \right\|_2 }}$.
\STATE \textbf{3.} Calculate the diagonal matrix ${ D}\in \mathbb{R}^{n \times n}$, where the $i$-th diagonal element is $\frac{1}{{2\left\| {x_i - W W^T x_i } \right\|_2 }}$.
\STATE \textbf{4.} Calculate the matrix $A$ by \\$A = \sum\limits_{i = 1}^c {X_i (D_i  - \frac{1}{{\mathbf{1}_i^T D_i \mathbf{1}_i }}D_i \mathbf{1}_i \mathbf{1}_i^T D_i )X_i^T }  - \lambda XDX^T$.
\STATE \textbf{5.} Update $\mu_i(1\le i \le c)$ by $\mu_i = \frac{1}{\mathbf{1}_i^TD_i \mathbf{1}_i} X_i D_i\mathbf{1}_i$. \\ Update $W$ by
$W  = \mathop {\min }_{W^T W = I} Tr(W^T A W)$, \emph{i.e.}, $W$ is formed by the $m$ eigenvectors of $A$ corresponding to the $m$ smallest eigenvalues.
\UNTIL {Converges}
\end{algorithmic}
\end{algorithm}

\subsection{Algorithm to the robust LDA problem (\ref{oprLDA})}
\label{rLDAalg}

In order to solve the robust LDA problem (\ref{oprLDA}), we need to solve the following problem according to the Algorithm~\ref{alg}:
\begin{equation}\label{}
\begin{array}{l}
 \mathop {\min }\limits_{W^T W = I,\left. {\mu _i } \right|_{i = 1}^c } \sum\limits_{i = 1}^c {\left\| {W^T (X_i  - \mu _i 1_i^T )} \right\|_{1,2} }  \\
 \quad\quad\quad\quad\quad\quad - \lambda \left( {\left\| X \right\|_{1,2}  - \left\| {X - WW^T X} \right\|_{1,2} } \right), \\
 \end{array}
\end{equation}
which is equivalent to
\begin{equation}\label{opWu21}
\begin{array}{l}
 \mathop {\min }\limits_{W^T W = I,\left. {\mu _i } \right|_{i = 1}^c } \sum\limits_{i = 1}^c {\left\| {W^T (X_i  - \mu _i \mathbf{1}_i^T )} \right\|_{1,2} }  \\
 \quad \quad \quad \quad\quad\quad  + \lambda \left\| {X - WW^T X} \right\|_{1,2}.  \\
 \end{array}\quad
\end{equation}
Using the re-weighted method as shown in \cite{RFSnips10,rpcaICML14}, we can iteratively solve the following problem to obtain a stationary point to the
problem (\ref{opWu21}):
\begin{equation}\label{}
\begin{array}{l}
 \mathop {\min }\limits_{W^T W = I,\left. {\mu _i } \right|_{i = 1}^c } \sum\limits_{i = 1}^c {Tr(W^T (X_i  - \mu _i \mathbf{1}_i^T )D_i (X_i  - \mu _i \mathbf{1}_i^T )^T W)}  \\
 \quad\quad\quad\quad\quad\quad + \lambda Tr((X - WW^T X)D(X - WW^T X)^T ), \nonumber \\
 \end{array}
\end{equation}
where ${D}_i(1\le i \le c)$ is a diagonal matrix with the $j$-th diagonal element as $\frac{1}{{2\left\| {W^T (x_j^i  - \mu_i) } \right\|_2 }}$, and ${ D}$ is a diagonal matrix with the the $i$-th diagonal element as $\frac{1}{{2\left\| {x_i - W W^T x_i } \right\|_2 }}$\footnote{In practice\cite{RFSnips10}, $\frac{1}{2\left\|v\right\|_2}$ can be calculated as $\frac{1}{2\sqrt{\left\|v\right\|_2^2+\varepsilon}} (\varepsilon\rightarrow 0)$, and the derived algorithm is to minimize $\sqrt{\left\|v\right\|_2^2+\varepsilon}$ instead of ${\left\|v\right\|_2}$.}.
The above problem is equivalent to
\begin{equation}\label{opWu21rew}
\begin{array}{l}
 \mathop {\min }\limits_{\scriptstyle W^T W = I, \hfill \atop
  \scriptstyle \left. {\mu _i } \right|_{i = 1}^c  \hfill} \sum\limits_{i = 1}^c {Tr(W^T (X_i  - \mu _i 1_i^T )D_i (X_i  - \mu _i 1_i^T )^T W)}  \\
  \quad\quad\quad\quad - \lambda Tr(W^T XDX^T W). \\
 \end{array}
\end{equation}
The problem (\ref{opWu21rew}) can be solved with closed form solution. For each $i$, by taking the derivative of Eq.(\ref{opWu21rew}) w.r.t. $\mu_i$ to zero, we have
\begin{equation}\label{solmu}
\mu_i = \frac{1}{\mathbf{1}_i^TD_i \mathbf{1}_i} X_i D_i\mathbf{1}_i.
\end{equation}
By replacing the $\mu_i$ in Eq.(\ref{opWu21rew}) with Eq.(\ref{solmu}), the problem (\ref{opWu21rew}) becomes
\begin{equation}\label{opWAW}
\mathop {\min }\limits_{W^T W = I} Tr(W^T AW),
\end{equation}
where $A = \sum\limits_{i = 1}^c {X_i (D_i  - \frac{1}{{\mathbf{1}_i^T D_i \mathbf{1}_i }}D_i \mathbf{1}_i \mathbf{1}_i^T D_i )X_i^T }  - \lambda XDX^T$.
The optimal solution $W$ to the problem (\ref{opWAW}) is formed by the eigenvectors of $A$ corresponding to the smallest eigenvalues.

Based on Algorithm~\ref{alg}, we propose an algorithm to solve the robust LDA problem (\ref{oprLDA}), which is described in Algorithm \ref{alg1}.
From Algorithm \ref{alg1}, we can see that our algorithm can be easily implemented without using any other additional optimization toolbox. We will prove that the algorithm decreases the objective value in each iteration and thus the convergence is guaranteed. The algorithm has closed form solution in each iteration, and the algorithm converges very fast. In our extensive empirical studies on nine benchmark data sets, the algorithm always converges within 20 iterations (convergence criterion: the objective function value difference between two iterations less than $10^{-6}$). More importantly, according to Theorem \ref{ratiolocal}, the converged solution is a stationary point of problem (\ref{oprLDA}) since we find a stationary point
of problem (\ref{opWu21}) in each iteration of the algorithm. Therefore, the quality of the solution found by Algorithm \ref{alg1} is theoretically guaranteed.

\subsection{Convergence Analysis of Algorithm \ref{alg1}}

To prove the convergence of the above algorithm, we first prove two lemmas.
\begin{lemma}
\label{lem1}
For any column-orthogonal matrix $W$ such that $W^T W = I$, we have the following inequality:
\[\left\| X \right\|_{1,2}  - \left\| {X - W W^T X} \right\|_{1,2}  \ge 0.\]
\end{lemma}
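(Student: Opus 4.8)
The plan is to reduce the matrix inequality to a column-by-column scalar inequality and then invoke the fact that $W W^T$ is an orthogonal projection. Since by definition $\|X\|_{1,2} = \sum_{i=1}^n \|x_i\|_2$ and likewise $\|X - W W^T X\|_{1,2} = \sum_{i=1}^n \|x_i - W W^T x_i\|_2$ (the $i$-th column of $X - W W^T X$ being $x_i - W W^T x_i$), it suffices to prove that for every column,
\[
\|x_i\|_2 \;\ge\; \|x_i - W W^T x_i\|_2 ,
\]
and then sum over $i = 1,\dots,n$.

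First I would observe that the constraint $W^T W = I$ makes $P := W W^T$ symmetric and idempotent ($P^T = P$ and $P^2 = W W^T W W^T = W W^T = P$), hence $P$ is the orthogonal projector onto the column space of $W$, and $I - P$ is the complementary orthogonal projector. Next I would write, for a fixed $i$, the orthogonal decomposition $x_i = P x_i + (I - P) x_i$ and note that $P x_i$ and $(I-P)x_i$ are orthogonal: $\langle P x_i, (I-P)x_i\rangle = x_i^T P^T (I - P) x_i = x_i^T (P - P^2) x_i = 0$. By the Pythagorean identity this gives
\[
\|x_i\|_2^2 \;=\; \|P x_i\|_2^2 + \|(I - P) x_i\|_2^2 \;\ge\; \|(I - P) x_i\|_2^2 ,
\]
and taking square roots yields $\|x_i\|_2 \ge \|x_i - W W^T x_i\|_2$, as needed. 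Summing over all columns gives $\|X\|_{1,2} \ge \|X - W W^T X\|_{1,2}$, which is the claimed inequality.

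There is essentially no serious obstacle here; the only point requiring a moment's care is the justification that $W W^T$ is an orthogonal projection (equivalently, that $\|x_i - W W^T x_i\|_2 \le \|x_i\|_2$), which follows immediately from $W^T W = I$ as above. One could alternatively phrase the whole argument in trace form — $\|X\|_F^2 - \|X - W W^T X\|_F^2 = Tr(W^T X X^T W) \ge 0$ — but that proves an $\ell_2$-version rather than the $\ell_{1,2}$-version stated, so the column-wise Pythagorean argument is the cleaner route to exactly the inequality in Lemma~\ref{lem1}.
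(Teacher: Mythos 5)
Your proof is correct and follows essentially the same route as the paper's: both reduce the claim to the column-wise inequality $\|x_i\|_2^2 - \|x_i - WW^Tx_i\|_2^2 = \|W^Tx_i\|_2^2 \ge 0$ (the paper writes this quantity as $Tr(W^Tx_ix_i^TW)$, you obtain it via the Pythagorean identity for the projector $WW^T$ — the same computation), then pass to unsquared norms and sum over columns. Your version is slightly more explicit about why $WW^T$ is an orthogonal projection, but there is no substantive difference.
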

\begin{proof}
For every $x_i$, according to $W^T W = I$, we have
\begin{equation}
 \left\| {x_i } \right\|_2^2  - \left\| {x_i  - W W^T x_i } \right\|_2^2    = Tr(W^T x_i x_i^T W )
  \ge 0.\nonumber
\end{equation}
So we have $\sum\limits_{i = 1}^n {\left( {\left\| {x_i } \right\|_2  - \left\| {x_i  - W W^T x_i } \right\|_2 } \right)}  \ge 0$. That is to say,
$\left\| X \right\|_{1,2}  - \left\| {X - W W^T X} \right\|_{1,2}  \ge 0$.
\end{proof}

\begin{lemma}
\label{lem3}
For any vectors $v_i$ and $\tilde v_i$, we have:
\[\sum_i {\frac{{\left\| {\tilde v_i } \right\|_2^2 }}{{2\left\| {v_i } \right\|_2 }}}  \le \sum_i {\frac{{\left\| v_i \right\|_2^2 }}{{2\left\| {v_i } \right\|_2 }}}  \Rightarrow \sum_i {\left\| {\tilde v_i } \right\|_2 }  \le \sum_i {\left\| {v_i } \right\|_2 }.
\]
\end{lemma}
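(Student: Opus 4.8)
The plan is to reduce the statement to an elementary termwise scalar inequality. Fix an index $i$ and write $a = \left\| \tilde v_i \right\|_2 \ge 0$ and $b = \left\| v_i \right\|_2$; we may assume $b > 0$ so that the weight $\frac{1}{2b}$ makes sense (terms with $v_i = 0$ can be dropped from both sides, or handled via the smoothing $\sqrt{\|v_i\|_2^2 + \varepsilon}$, $\varepsilon \to 0^+$, as in the paper's footnote). The key inequality I would prove is
\[
a - \frac{a^2}{2b} \le \frac{b}{2},
\]
which, after multiplying through by $2b > 0$, is equivalent to $2ab - a^2 \le b^2$, that is, to $0 \le (a - b)^2$ --- trivially true. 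Writing $\frac{b}{2} = b - \frac{b^2}{2b}$, this is exactly
\[
\left\| \tilde v_i \right\|_2 - \frac{\left\| \tilde v_i \right\|_2^2}{2 \left\| v_i \right\|_2} \le \left\| v_i \right\|_2 - \frac{\left\| v_i \right\|_2^2}{2 \left\| v_i \right\|_2}.
\]

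Next I would sum this inequality over all $i$ and rearrange, obtaining
\[
\sum_i \left\| \tilde v_i \right\|_2 \le \sum_i \left\| v_i \right\|_2 + \left( \sum_i \frac{\left\| \tilde v_i \right\|_2^2}{2 \left\| v_i \right\|_2} - \sum_i \frac{\left\| v_i \right\|_2^2}{2 \left\| v_i \right\|_2} \right).
\]
The hypothesis of the lemma states precisely that the parenthesized quantity is $\le 0$, so $\sum_i \left\| \tilde v_i \right\|_2 \le \sum_i \left\| v_i \right\|_2$, which is the conclusion.

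I do not expect any genuine obstacle here: the whole content is the one-line fact $(a - b)^2 \ge 0$, equivalently that the concave map $t \mapsto \sqrt{t}$ lies below its tangent line at $t = b^2$. The only step needing a sentence of care is the possible vanishing of some $v_i$, handled as noted above. In the writeup I would state the termwise inequality as a short sub-claim and then close with the two-line summation-and-rearrangement argument.
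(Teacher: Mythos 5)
Your proof is correct and is essentially the same as the paper's: both reduce the claim to the termwise inequality $\left\| \tilde v_i \right\|_2 - \left\| v_i \right\|_2 \le \frac{\left\| \tilde v_i \right\|_2^2}{2\left\| v_i \right\|_2} - \frac{\left\| v_i \right\|_2^2}{2\left\| v_i \right\|_2}$, which is just $(a-b)^2 \ge 0$, and then sum over $i$ and invoke the hypothesis. Your explicit handling of the degenerate case $v_i = 0$ is a minor point of extra care that the paper omits.
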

\begin{proof}
Obviously, for every $i$ we have $- \left( {\left\| {\tilde v_i } \right\|_2  - \left\| {v_i } \right\|_2 } \right)^2  \le 0$, which indicates
$\left\| {\tilde v_i } \right\|_2  - \left\| { v_i } \right\|_2  \le \frac{{\left\| {\tilde v_i } \right\|_2^2 }}{{2\left\| { v_i } \right\|_2 }} - \frac{{\left\| { v_i } \right\|_2^2 }}{{2\left\| { v_i } \right\|_2 }}$ for every $i$, and thus $\sum_i {\left\| {\tilde v_i } \right\|_2 }  - \sum_i {\left\| { v_i } \right\|_2 }  \le \sum_i {\frac{{\left\| {\tilde v_i } \right\|_2^2 }}{{2\left\| { v_i } \right\|_2 }}}  - \sum_i {\frac{{\left\| { v_i } \right\|_2^2 }}{{2\left\| { v_i } \right\|_2 }}}$. Then if $\sum_i {\frac{{\left\| {\tilde v_i } \right\|_2^2 }}{{2\left\| { v_i } \right\|_2 }}}  \le \sum_i {\frac{{\left\| { v_i} \right\|_2^2 }}{{2\left\| { v_i } \right\|_2 }}}$, we have $\sum_i {\left\| {\tilde v_i } \right\|_2 }  - \sum_i {\left\| { v_i } \right\|_2 } \le 0$, which completes the proof.
\end{proof}

Now we prove the main result as the following theorem:
\begin{theorem}
\label{thm1}
  The Algorithm \ref{alg1} decreases the objective value of problem~(\ref{oprLDA}) in each iteration until converges.
\end{theorem}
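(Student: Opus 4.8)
The plan is to recognize one outer iteration of Algorithm~\ref{alg1} as a single re-weighting step for the penalized problem (\ref{opWu21}), to show that this step already forces the inner objective $f(v)-\lambda g(v)$ down to a value $\le 0$, and then to close exactly as in the proof of Theorem~\ref{prop0}. Write $f(W,\{\mu_i\})=\sum_{i=1}^c\|W^T(X_i-\mu_i\mathbf{1}_i^T)\|_{1,2}$ for the numerator and $g(W)=\|X\|_{1,2}-\|X-WW^TX\|_{1,2}$ for the denominator of (\ref{oprLDA}); Lemma~\ref{lem1} gives $g(W)\ge 0$, and $f\ge 0$ is trivial, so the $\lambda$ computed in step~1 satisfies $\lambda\ge 0$, and by that same definition $f(W,\{\mu_i\})-\lambda g(W)=0$ at the current iterate $(W,\{\mu_i\})$.

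The first substantive step is to identify what steps 4--5 compute. Holding the diagonal weights $D_i,D$ of steps 2--3 fixed (they are built from the \emph{current} $W,\mu_i$), the pair $(\widetilde W,\{\widetilde\mu_i\})$ returned by step~5 is the global minimizer, over $W^TW=I$ and arbitrary $\{\mu_i\}$, of the weighted quadratic surrogate
\begin{equation}\label{}
Q(W',\{\mu'_i\})=\sum_{i=1}^c\sum_{j=1}^{n_i}\frac{\|W'^T(x_j^i-\mu'_i)\|_2^2}{2\|W^T(x_j^i-\mu_i)\|_2}+\lambda\sum_{i=1}^n\frac{\|x_i-W'W'^Tx_i\|_2^2}{2\|x_i-WW^Tx_i\|_2}:
\end{equation}
minimizing $Q$ over $\mu'_i$ is a PSD quadratic giving the closed form (\ref{solmu}); substituting it leaves the trace problem (\ref{opWAW}), whose minimizer over $W^TW=I$ is the bottom-$m$ eigenvectors of $A$ by the Ky Fan principle; and on the feasible set $Q$ differs from the objective of (\ref{opWu21rew}) only by the additive constant $\lambda\,Tr(XDX^T)$ (use $Tr((X-WW^TX)D(X-WW^TX)^T)=Tr(XDX^T)-Tr(W^TXDX^TW)$), so the argmin is unchanged. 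Since the current $(W,\{\mu_i\})$ is feasible, this yields $Q(\widetilde W,\{\widetilde\mu_i\})\le Q(W,\{\mu_i\})$.

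Now apply Lemma~\ref{lem3} to the combined family of vectors consisting of all $W^T(x_j^i-\mu_i)$ together with all the \emph{rescaled} reconstruction residuals $\lambda(x_i-WW^Tx_i)$, with new versions $\widetilde W^T(x_j^i-\widetilde\mu_i)$ and $\lambda(x_i-\widetilde W\widetilde W^Tx_i)$. Because $\|\lambda(x_i-\widetilde W\widetilde W^Tx_i)\|_2^2/(2\|\lambda(x_i-WW^Tx_i)\|_2)=\lambda\,\|x_i-\widetilde W\widetilde W^Tx_i\|_2^2/(2\|x_i-WW^Tx_i\|_2)$, the hypothesis of Lemma~\ref{lem3} for this family is precisely $Q(\widetilde W,\{\widetilde\mu_i\})\le Q(W,\{\mu_i\})$, which we have. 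Its conclusion is $f(\widetilde W,\{\widetilde\mu_i\})+\lambda\|X-\widetilde W\widetilde W^TX\|_{1,2}\le f(W,\{\mu_i\})+\lambda\|X-WW^TX\|_{1,2}$. Subtracting the constant $\lambda\|X\|_{1,2}$ from both sides turns this into $f(\widetilde W,\{\widetilde\mu_i\})-\lambda g(\widetilde W)\le f(W,\{\mu_i\})-\lambda g(W)=0$, and since $g(\widetilde W)\ge 0$ by Lemma~\ref{lem1}, dividing gives $f(\widetilde W,\{\widetilde\mu_i\})/g(\widetilde W)\le\lambda=f(W,\{\mu_i\})/g(W)$ --- the objective of (\ref{oprLDA}) does not increase. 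It strictly decreases unless $(W,\{\mu_i\})$ already minimizes $Q$ (equivalently, is a fixed point of the iteration), so the value strictly decreases until convergence; combined with the lower bound on (\ref{oprLDA}) this proves the claim.

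The step I expect to need the most care is the middle one: confirming that step~5 is genuinely the \emph{global} minimizer of the surrogate $Q$ (convexity in the means, the Ky Fan characterization of the orthogonality-constrained trace minimum, and that the constant $\lambda\,Tr(XDX^T)$ does not move the argmin), and then matching $Q(\widetilde W,\{\widetilde\mu_i\})\le Q(W,\{\mu_i\})$ exactly with the hypothesis of Lemma~\ref{lem3} --- in particular carrying the factor $\lambda$ on the reconstruction terms through the rescaling, and tracking the additive constant $\lambda\|X\|_{1,2}$ that converts the penalized objective back into the ratio. One should also note, as in the footnote, that when some $\|W^T(x_j^i-\mu_i)\|_2$ or $\|x_i-WW^Tx_i\|_2$ vanishes the same argument is run on the $\varepsilon$-smoothed weights and the monotonicity is for the corresponding smoothed objective.
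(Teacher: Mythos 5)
Your proposal is correct and follows essentially the same route as the paper's proof: step 5 globally minimizes the re-weighted quadratic surrogate (equivalent to problem (\ref{opWu21rew}) via the trace identity $Tr(W^TXDX^TW)=Tr(XDX^T)-Tr((X-WW^TX)D(X-WW^TX)^T)$), Lemma~\ref{lem3} is applied to the combined family with $\lambda$ folded into the reconstruction residuals, and Lemma~\ref{lem1} together with the step-1 identity $f-\lambda g=0$ converts the penalized decrease back into a decrease of the ratio. Your explicit remarks on the additive constant not moving the argmin and on the $\varepsilon$-smoothed weights are careful additions, but the argument is the same as the paper's.
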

\begin{proof}
 In each iteration of Algorithm \ref{alg1}, suppose the updated $W$ and $\mu_i(1\le i \le c)$ in step 5 are $\tilde W$ and $\tilde \mu_i(1\le i \le c)$, respectively. According to the analysis in Section \ref{rLDAalg}, we know $\tilde W, \tilde \mu_i(1\le i \le c)$ is the optimal solution to the problem (\ref{opWu21rew}). So we have
\begin{align*}
& \sum\limits_{i = 1}^c {Tr(\tilde W^T (X_i  - \tilde \mu_i\mathbf{1}_i^T )D_i (X_i  - \tilde \mu_i\mathbf{1}_i^T )^T \tilde W) }  \\
&  - \lambda Tr(\tilde W^T X D X^T \tilde W)  \\
  \le & \sum\limits_{i = 1}^c {Tr( W^T (X_i  -  \mu_i\mathbf{1}_i^T )D_i (X_i  -  \mu_i\mathbf{1}_i^T )^T  W) }  \\
&  - \lambda Tr( W^T X D X^T  W).
\end{align*}
Note that the following equality holds: $Tr(W^T XD X^T W) = Tr(XD X^T)  - Tr(( {X - WW^T X} )D ( {X - WW^T X} )^T)$. Then we have
\begin{align*}
& \sum\limits_{i = 1}^c {Tr(\tilde W^T (X_i  - \tilde \mu_i\mathbf{1}_i^T )D_i (X_i  - \tilde \mu_i\mathbf{1}_i^T )^T \tilde W) } \\
&  + \lambda Tr(( {X - \tilde W \tilde W^T X} ) D ( {X - \tilde W \tilde W^T X})^T)  \\
  \le& \sum\limits_{i = 1}^c {Tr( W^T (X_i  -  \mu_i\mathbf{1}_i^T )D_i (X_i  -  \mu_i\mathbf{1}_i^T )^T  W) } \\
&  + \lambda Tr(( {X -  W  W^T X} ) D ( {X -  W  W^T X})^T)  \\
\Rightarrow &\sum\limits_{i = 1}^c {\sum\limits_{j = 1}^{n_i } {\frac{\left\| {\tilde W^T (x_j^i  - \tilde \mu_i )} \right\|_2^2 }{2\left\| {W^T (x_j^i  - \mu_i )} \right\|_2 }} }  + \sum\limits_{j = 1}^n {\frac{\left\| {\lambda (x_j  - \tilde W \tilde W^T x_j )} \right\|_2^2 }{2\left\| {\lambda (x_j  - W W^T x_j )} \right\|_2 }}  \\
  \le &\sum\limits_{i = 1}^c {\sum\limits_{j = 1}^{n_i } {\frac{\left\| { W^T (x_j^i  - \mu_i )} \right\|_2^2 }{2\left\| {W^T (x_j^i  - \mu_i )} \right\|_2 }} }  + \sum\limits_{j = 1}^n {\frac{\left\| {\lambda (x_j  -  W  W^T x_j )} \right\|_2^2 }{2\left\| {\lambda (x_j  - W W^T x_j )} \right\|_2 }}.
\end{align*}

According to Lemma \ref{lem3}, we have the following inequalities:
\begin{align*}
& \sum\limits_{i = 1}^c {\sum\limits_{j = 1}^{n_i } {\left\| {\tilde W^T (x_j^i  - \tilde \mu_i )} \right\|_2 } }  + \sum\limits_{j = 1}^n {\left\| {\lambda (x_j  - \tilde W \tilde W^T x_j )} \right\|_2 }  \\
  \le & \sum\limits_{i = 1}^c {\sum\limits_{j = 1}^{n_i } {\left\| { W^T (x_j^i  -  \mu_i )} \right\|_2 } }  + \sum\limits_{j = 1}^n {\left\| {\lambda (x_j  -  W  W^T x_j )} \right\|_2 }  \\
  \Rightarrow & \sum\limits_{i = 1}^c {\left\| {\tilde W^T (X_i  - \tilde \mu_i\mathbf{1}_i^T )} \right\|_{1,2} }  + \lambda \left\| {X - \tilde W\tilde W^T X} \right\|_{1,2}  \\
  \le& \sum\limits_{i = 1}^c {\left\| { W^T (X_i  -  \mu_i\mathbf{1}_i^T )} \right\|_{1,2} }  + \lambda \left\| {X -  W W^T X} \right\|_{1,2},
\end{align*}
which is equivalent to
\begin{align}
\label{ineq1}
& \sum\limits_{i = 1}^c {\left\| {\tilde W^T (X_i  - \tilde \mu_i\mathbf{1}_i^T )} \right\|_{1,2} } + \nonumber\\
&  \lambda \left\| {X - \tilde W\tilde W^T X} \right\|_{1,2}  - \lambda \left\| X \right\|_{1,2}  \nonumber\\
  \le & \sum\limits_{i = 1}^c {\left\| { W^T (X_i  -  \mu_i\mathbf{1}_i^T )} \right\|_{1,2} } + \nonumber\\
&  \lambda \left\| {X -  W W^T X} \right\|_{1,2}  - \lambda \left\| X \right\|_{1,2}.
\end{align}
According to step 1 in Algorithm \ref{alg1}, we know that
\begin{equation}
\label{lamda}
  \lambda  = \frac{{\sum\limits_{i = 1}^c {\left\| {W^T (X_i  - \mu_i\mathbf{1}_i^T )} \right\|_{1,2} } }}{{\left\| X \right\|_{1,2}  - \left\| {X - W W^T X} \right\|_{1,2} }},
\end{equation}
which indicates that
\begin{equation}
\label{eq1}
\begin{array}{l}
 \sum\limits_{i = 1}^c {\left\| {W^T (X_i  - \mu_i\mathbf{1}_i^T )} \right\|_{1,2} }  \\
  + \lambda \left\| {X - W W^T X} \right\|_{1,2}  - \lambda \left\| X \right\|_{1,2}  = 0. \\
 \end{array}
\end{equation}
According to Eq.~(\ref{ineq1}) and Eq.~(\ref{eq1}), we arrive at
\begin{equation}
\label{ineq2}
\begin{array}{l}
 \sum\limits_{i = 1}^c {\left\| {\tilde W^T (X_i  - \tilde \mu_i\mathbf{1}_i^T )} \right\|_{1,2} }  \\
  + \lambda \left\| {X - \tilde W \tilde W^T X} \right\|_{1,2}  - \lambda \left\| X \right\|_{1,2} \le 0. \\
 \end{array}
\end{equation}
Note that $\left\| X \right\|_{1,2}  - \left\| {X - \tilde W\tilde W^T X} \right\|_{1,2} \ge 0$ according to Lemma \ref{lem1}, So Eq.~(\ref{ineq2}) results in the following inequality:
\begin{equation}
\label{ineq3}
\frac{{\sum\limits_{i = 1}^c {\left\| {\tilde W^T (X_i  - \tilde \mu_i\mathbf{1}_i^T )} \right\|_{1,2} } }}{{\left\| X \right\|_{1,2}  - \left\| {X - \tilde W \tilde W^T X} \right\|_{1,2} }} \le \lambda.
\end{equation}
Based on Eq.~(\ref{lamda}) and Eq.~(\ref{ineq3}), we arrive at
\begin{eqnarray}
&&\frac{{\sum\limits_{i = 1}^c {\left\| {\tilde W^T (X_i  - \tilde \mu_i\mathbf{1}_i^T )} \right\|_{1,2} } }}{{\left\| X \right\|_{1,2}  - \left\| {X - \tilde W \tilde W^T X} \right\|_{1,2} }}  \nonumber\\
&&\le \frac{{\sum\limits_{i = 1}^c {\left\| { W^T (X_i  -  \mu_i\mathbf{1}_i^T )} \right\|_{1,2} } }}{{\left\| X \right\|_{1,2}  - \left\| {X -  W W^T X} \right\|_{1,2} }}.
\end{eqnarray}
Note that the equalities in the above equations hold only when the algorithm converges.
Therefore, the Algorithm \ref{alg1} will decrease the objective value in each iteration until the algorithm converges.

On the other hand, the denominator in Eq.~(\ref{oprLDA}) is not smaller than $0$ according to Lemma \ref{lem1}, thus the problem (\ref{oprLDA}) has a lower bound $0$. Therefore, Algorithm \ref{alg1} will converge.
\end{proof}

\begin{figure*}[!t]
  \centering
  \subfigure[Vehicle data set.]{\label{fig_acc_var_vehicle}\includegraphics[width=0.4\linewidth]{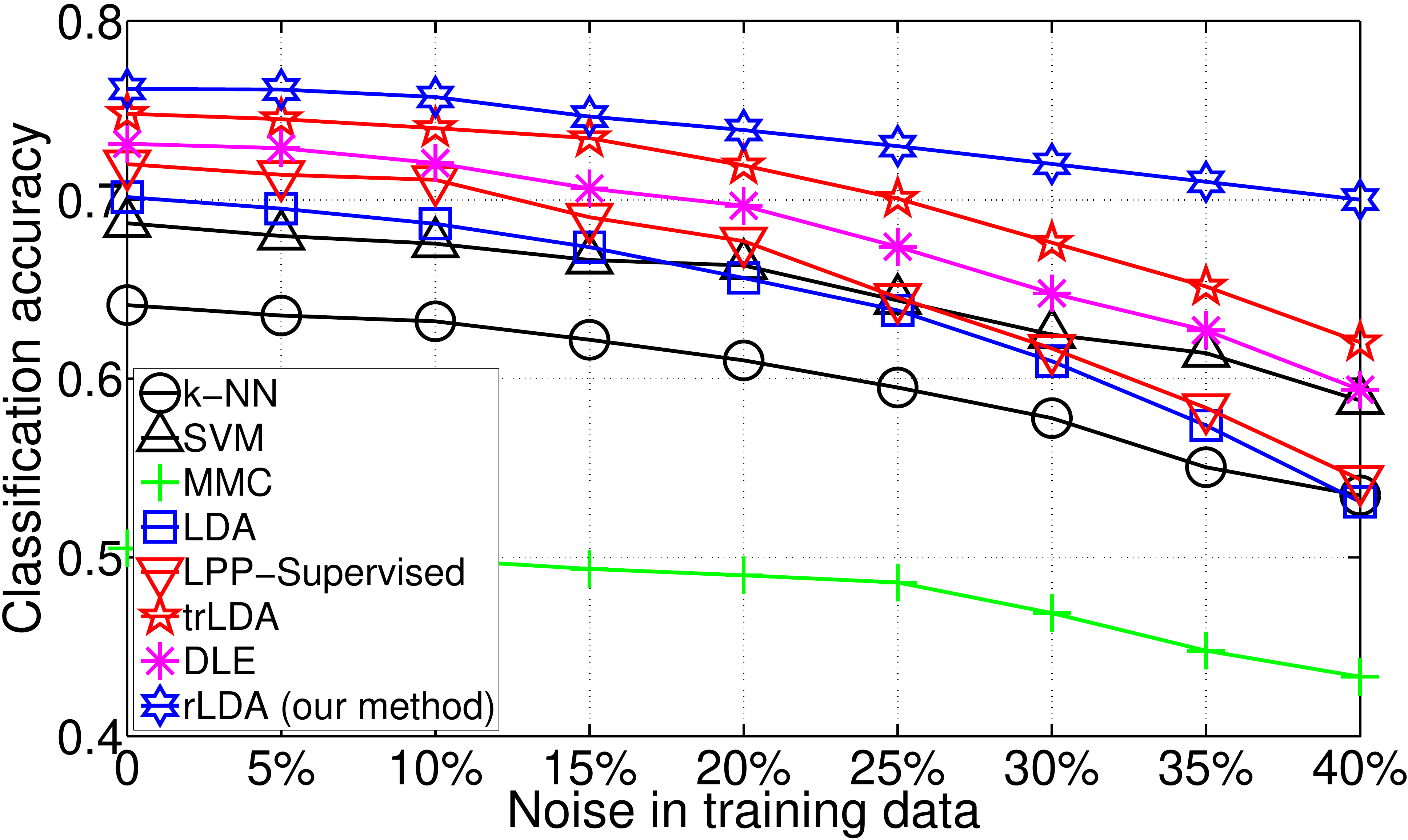}}\qquad
\subfigure[Dermatology data set.]{\label{fig_acc_var_dermatology}\includegraphics[width=0.4\linewidth]{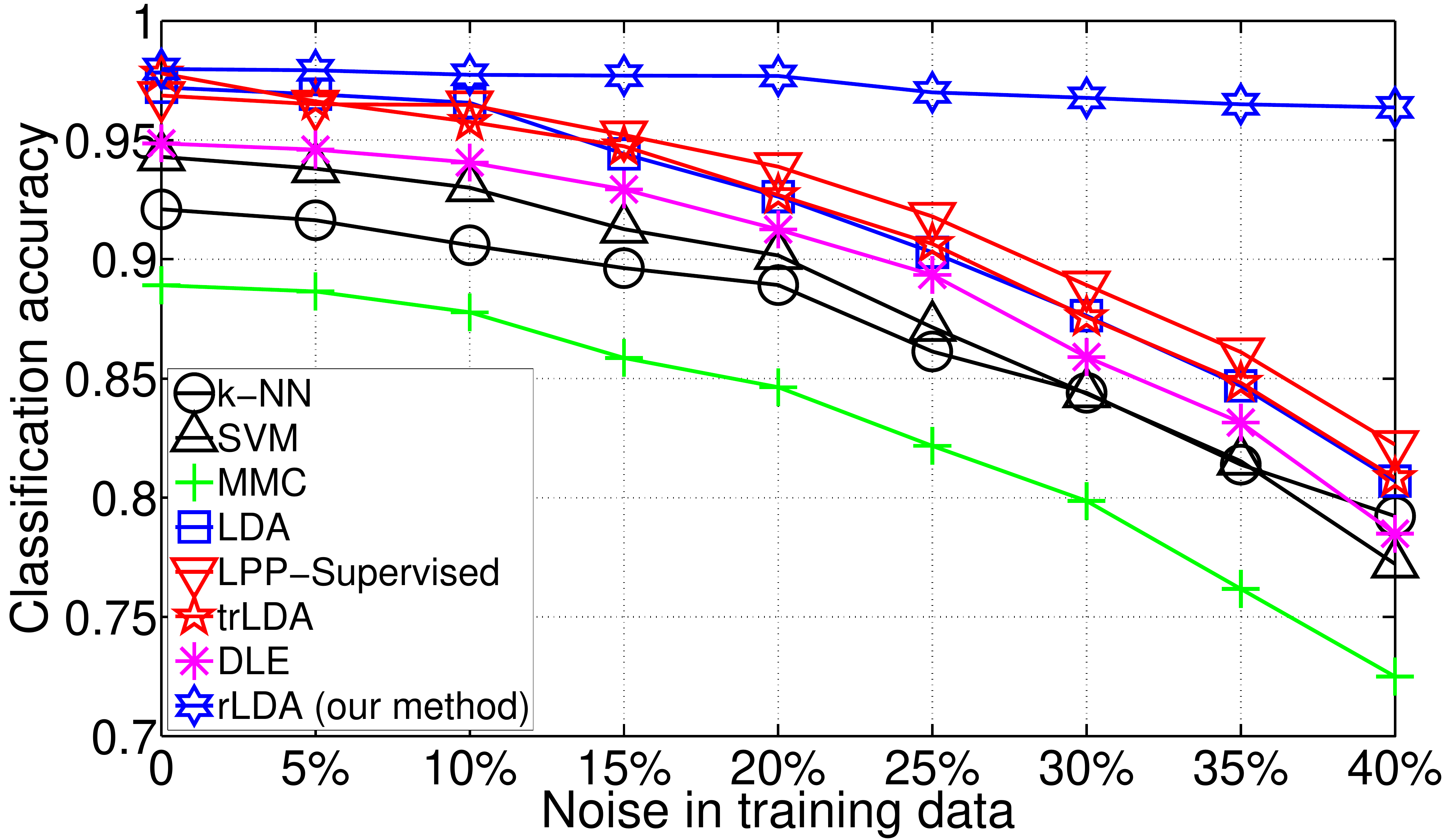}}\qquad
  \subfigure[Coil-20 data set.]{\label{fig:acc_var_coil}\includegraphics[width=0.4\linewidth]{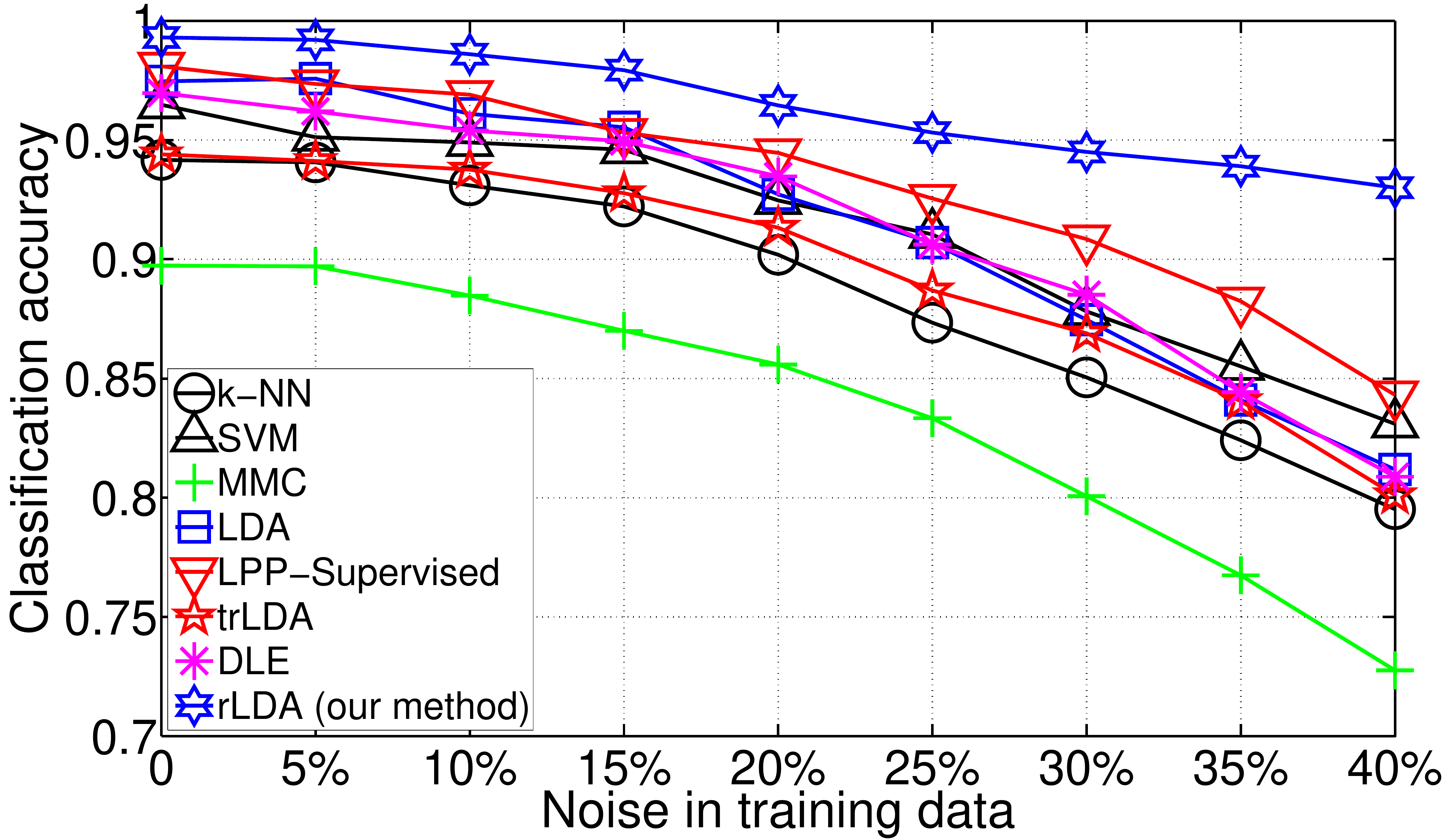}}\qquad
  \subfigure[ORL face data set.]{\label{fig:acc_var_orl}\includegraphics[width=0.4\linewidth]{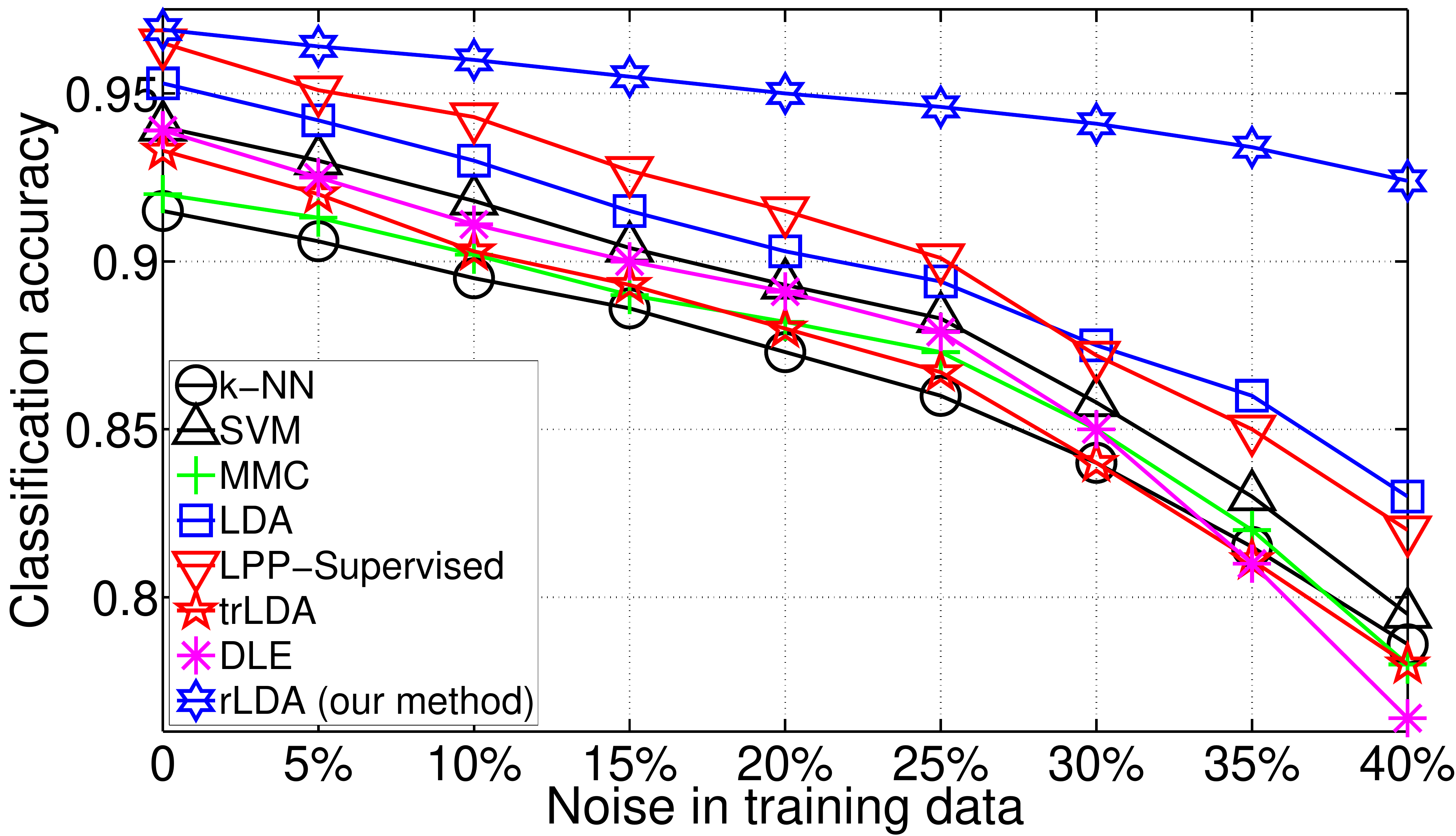}}\qquad
  \subfigure[Yale face data set.]{\label{fig:acc_var_yale}\includegraphics[width=0.4\linewidth]{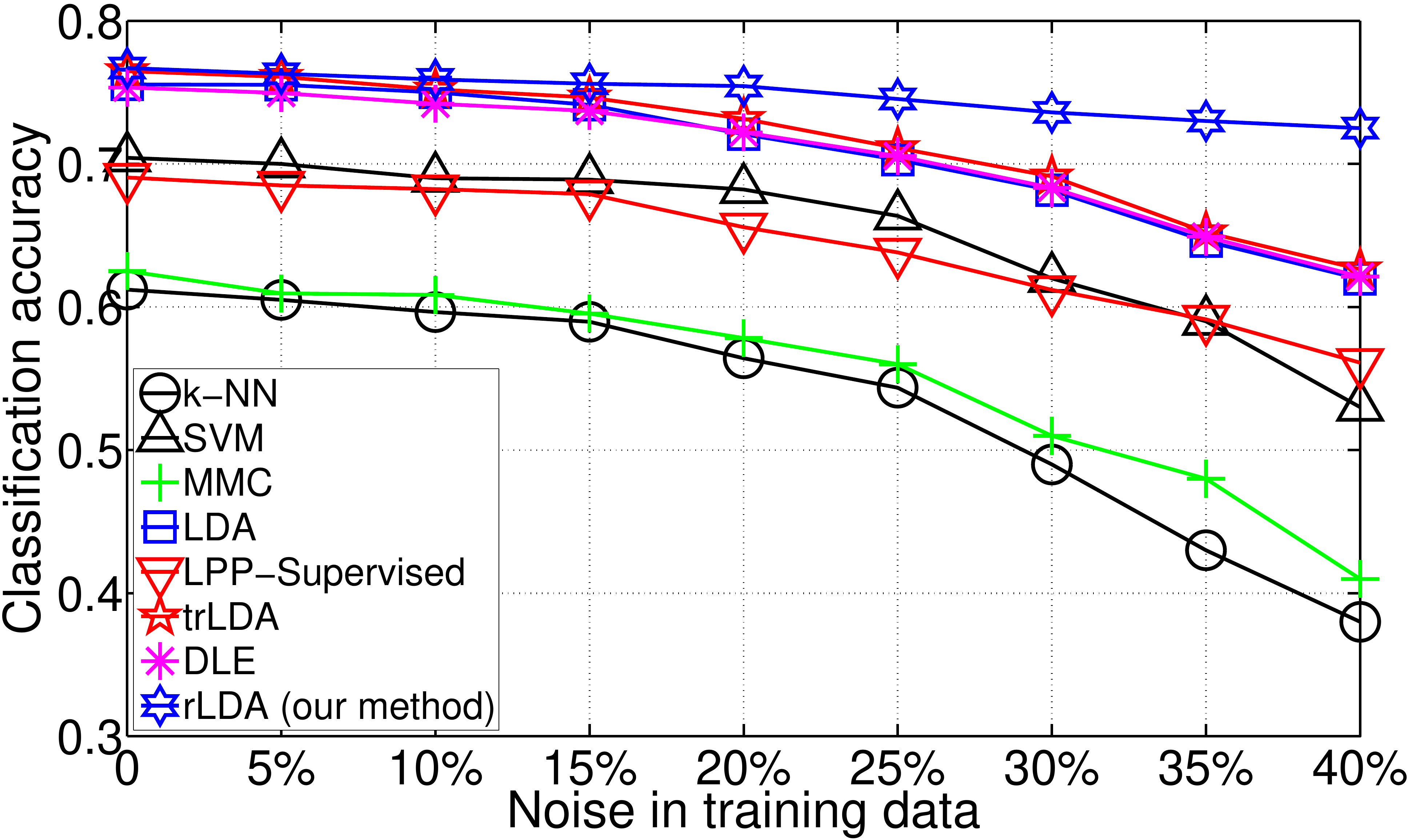}}\qquad
  \subfigure[PIE face data set.]{\label{fig:acc_var_pie}\includegraphics[width=0.4\linewidth]{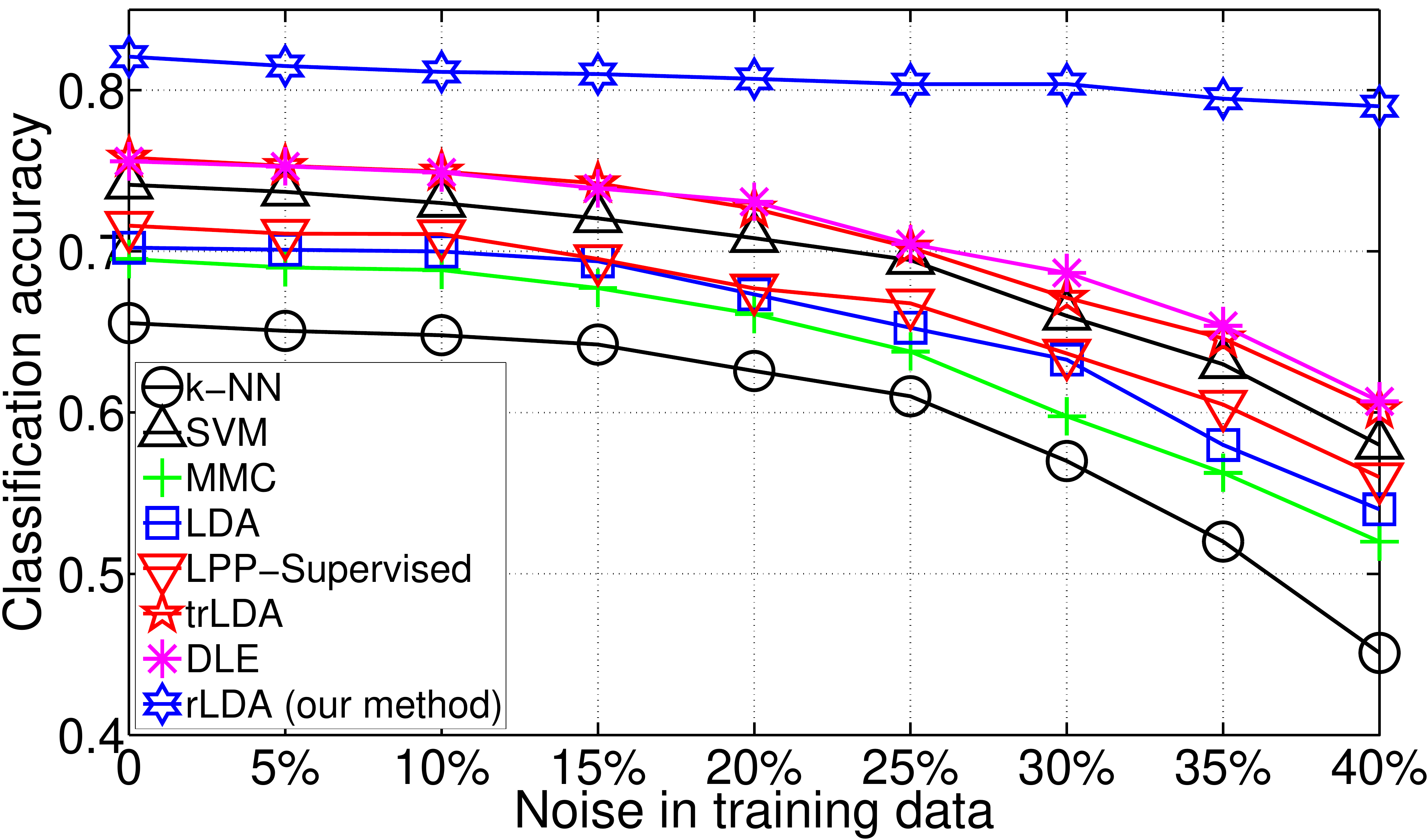}}\qquad
   \subfigure[Reuters data set.]{\label{fig:acc_var_reuters}\includegraphics[width=0.4\linewidth]{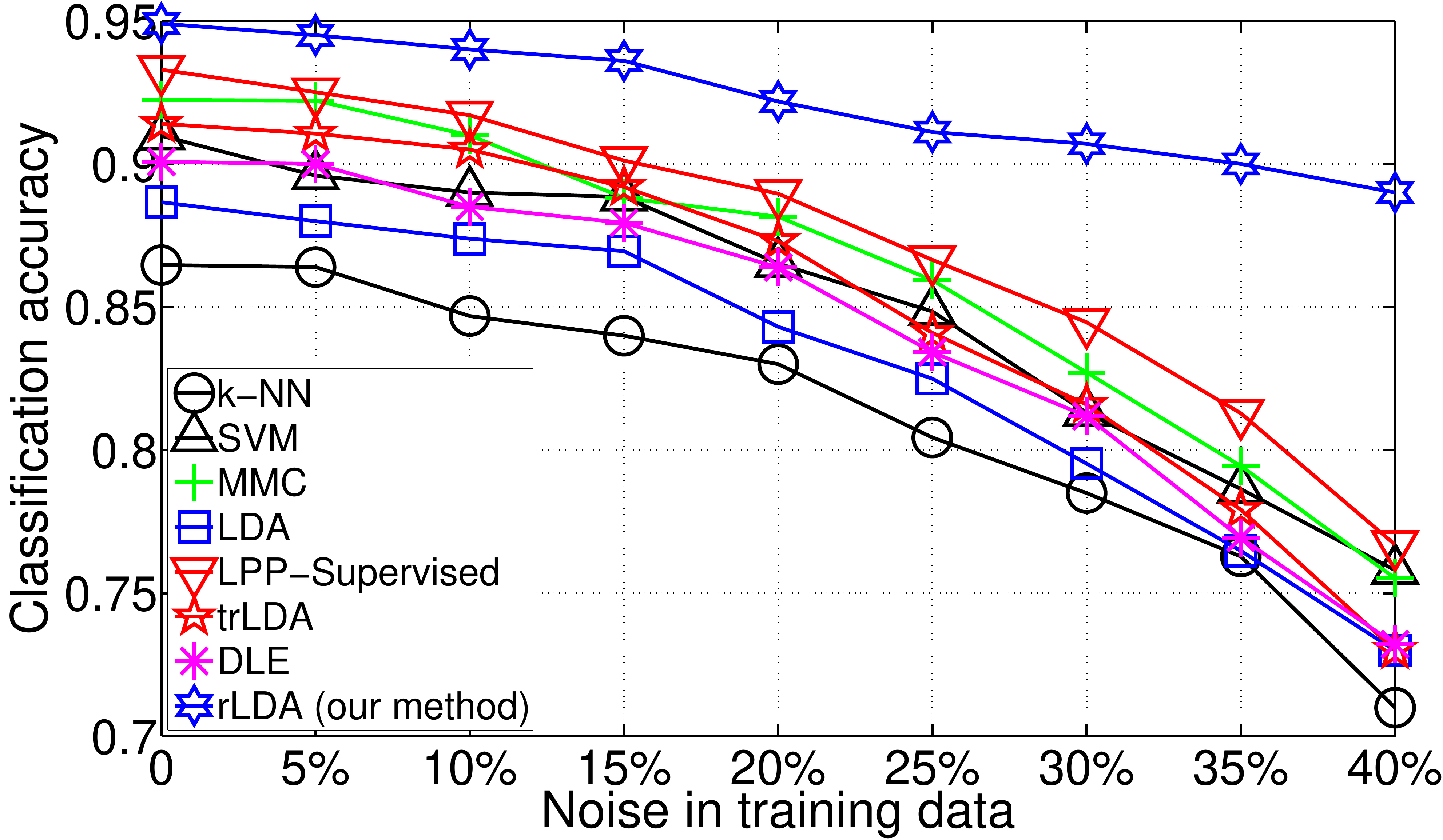}}\qquad
    \subfigure[TDT2 corpus data set.]{\label{fig:acc_var_tdt2}\includegraphics[width=0.4\linewidth]{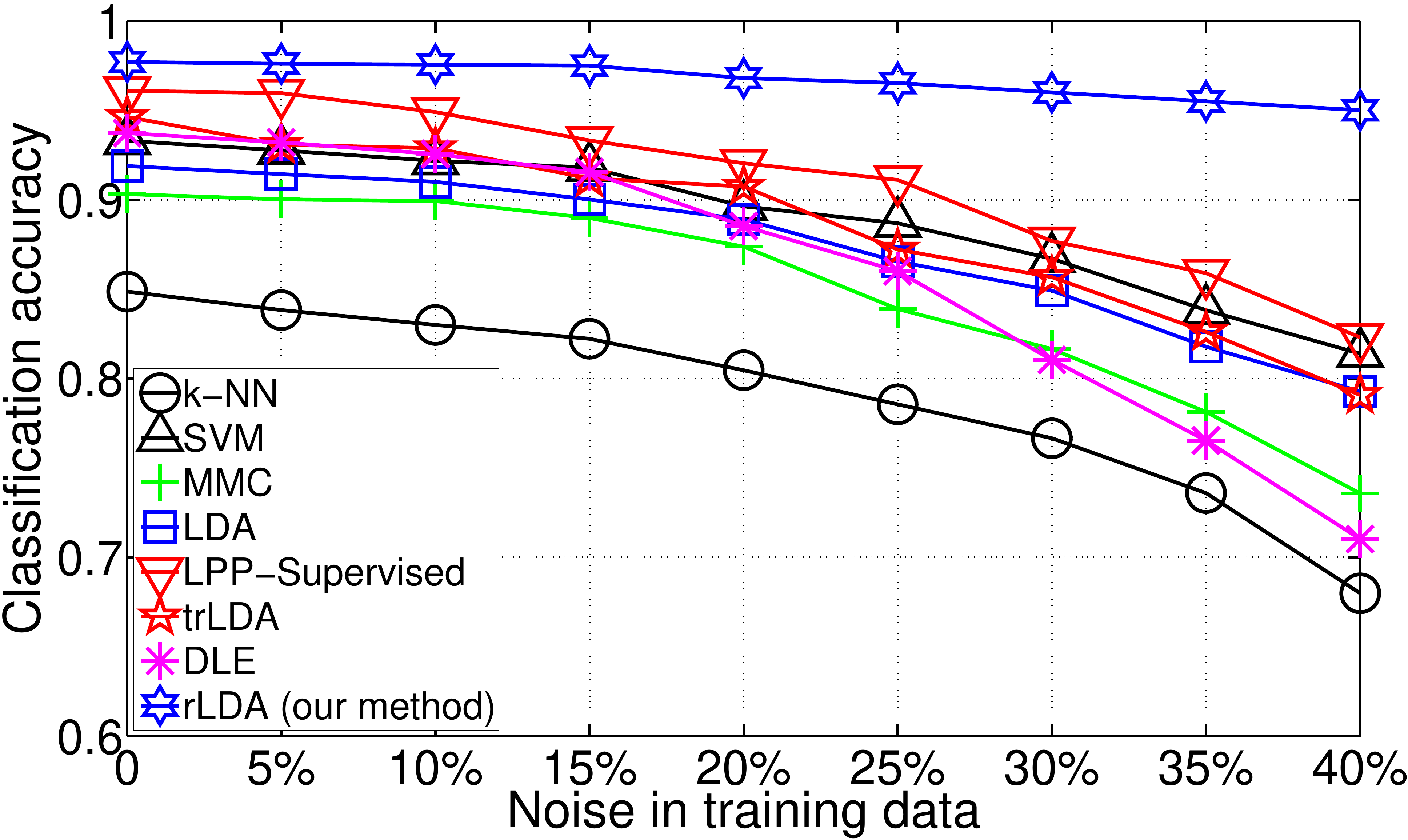}}\qquad
    \subfigure[20Newsgroups data set.]{\label{fig:acc_var_20news}\includegraphics[width=0.4\linewidth]{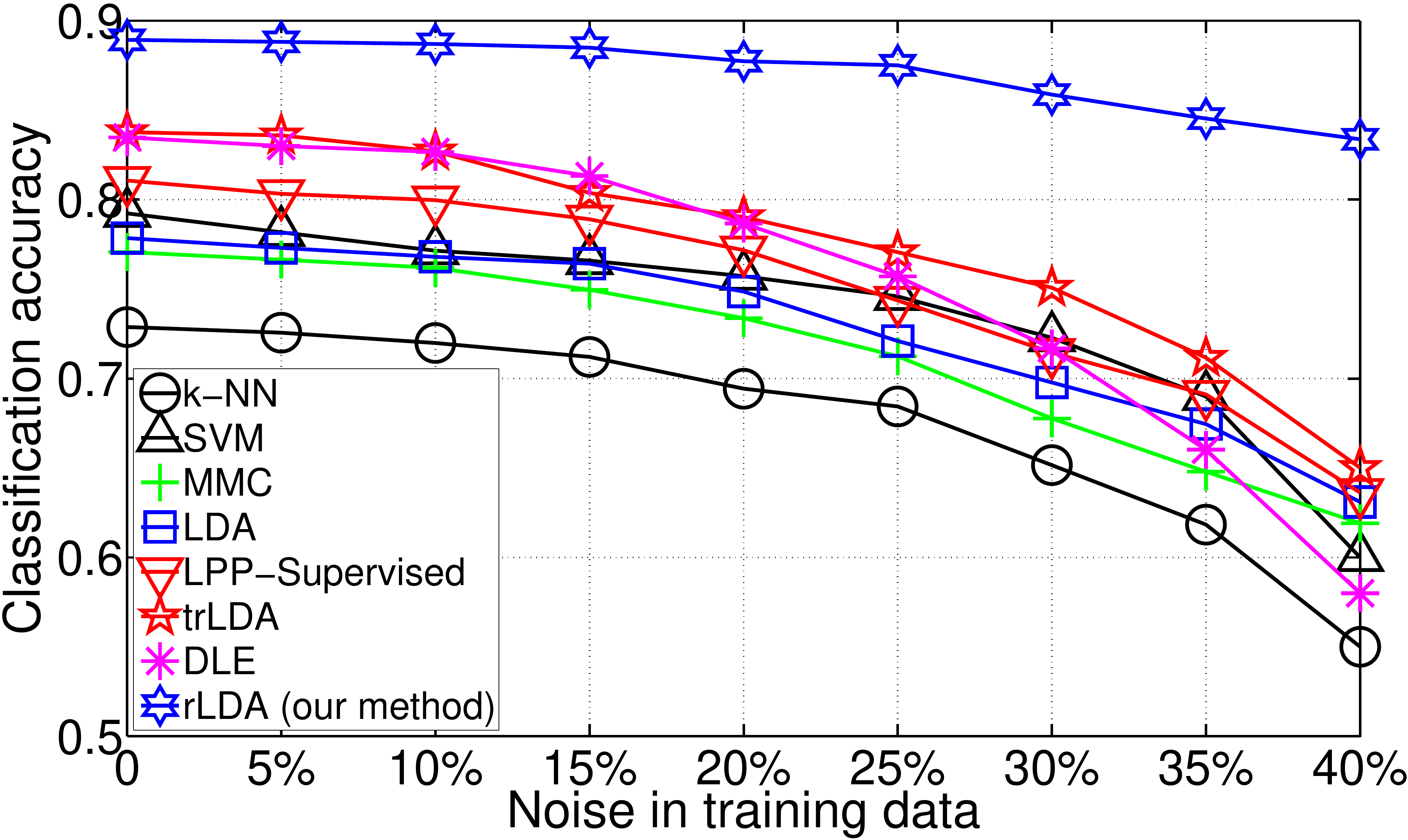}}
  \caption{Classification accuracy of compared methods on all nine data sets when imposed noise in training data varies. The performance of our method is considerably stable against the variations of noise.}
  \label{fig:acc_var}
\end{figure*}

\subsection{Computational Complexity Analysis of Algorithm \ref{alg1}}

The main computational burden of Algorithm \ref{alg1} is from step 4 and step 5. Calculating $A$ costs $O(d^2n)$, and calculating $W_{t+1}$ costs $O(d^3)$. Therefore, the computational complexity of Algorithm \ref{alg1} is $O(d^2pT)$, where $p=max(d,n)$ and $T$ is the iteration number. Note that the computational complexity is the same as that of the trace-ratio LDA method \cite{traceratioTNN09}, thus the proposed algorithm does not increase computational burden while improving robustness of the trace-ratio LDA method. In practice, we find that the proposed algorithm always converges within 5-20 iterations.

\section{Experimental Results}
\label{sec:exp}

In this section, we will evaluate the proposed \mnf method by nine benchmark data sets, and compared it with the state-of-the-art methods.

\subsection{Demonstration Using Synthetic Data}
\label{sec:exp_toy}

First, we use a synthetic data to verify the effectiveness of our rLDA method. The toy data include data sampled from two different Gaussian distributions, and also include a few data that are not sampled from the two distribution and acted as noise data. The projection direction found by LDA and our rLDA are drawn in Fig.~(\ref{toy}). It is known that LDA is optimal if the data of classes come from Gaussian distributions with a single shared covariance. From the result of Fig.~(\ref{toy}), we can see that LDA can not find the optimal direction when there exist noise data in the data, which is consistent with the analysis in the paper that LDA is sensitive to outliers. The result in Fig.~(\ref{toy}) shows that the proposed rLDA can find the optimal direction in this noise case, which indicates that the proposed rLDA is more robust to outliers than traditional LDA.

\subsection{Experimental Results on Real Benchmark Data Sets}
\label{sec:exp_noiseless}

\Heading{Data sets descriptions.}
We evaluate the proposed method on nine widely used benchmark data sets in machine learning and pattern recognition. The data descriptions are summarized in \tblref{tbl:data}. The first three data sets are obtained from UCI machine learning data repository. The three face data sets are AT\&T face data set \cite{ATT:Data}, YALE face data set \cite{GeBeKr01}, and CMU PIE (Face Pose, Illumination, and Expression) face data set \cite{pie}. All face images are resized to $32 \times 32$ following standard computer vision experimental conventions (reducing the misalignment effects). For the document data sets, following previous studies, for Reuters21578 data set, we remove the keywords appearing less than 50 times and end up with 1599 features; for TDT2 corpus data set, we remove the keywords appearing less than 100 times, and end up with 3157 and 4480 features, respectively.

\begin{table}[t]
\centering
  \caption{Data sets used in our experiments.}
  \label{tbl:data}
  \begin{tabularx}{\linewidth}{lZZZ}
    \toprule
    Data set & Number & Dimension & Classes\\
    \midrule
    Vehicle & 946 & 18 & 4\\
    Dermatology & 366 & 34 & 6\\
    Coil-20 & 1440 & 1024 & 20\\
    ORL Face & 400 & 1024 & 40\\
    Yale Face & 165 & 1024 & 15\\
    PIE Face & 3329 & 1024 & 68\\
    Reuters21578 & 8293 & 1599 & 65\\
    TDT2 Corpus & 9394 & 3157 & 30\\
    20Newsgroups & 18744 & 4480 & 20
    \\\bottomrule
  \end{tabularx}
\end{table}

\Heading{Experimental setups.}
We compare the proposed \mna method against the following related supervised methods.
(1) linear discriminant analysis (LDA) \cite{fukunaga1990introduction}, (2) supervised LPP (LPP-{\scriptsize Supervised}) \cite{he2003lpp}, (3) maximum margin criterion (MMC) \cite{li2003mmc}, (4) trace-ratio LDA (trLDA) \cite{traceratioTNN09}, and (5) discriminant Laplacian embedding (DLE) \cite{wang2010discriminant} methods. In addition, as baselines, we also report the classification results by (6) \knna and (7) support vector machine (SVM) in the original feature space.


For LDA and DLE methods, we reduce the data dimensionality to $c-1$, which is usually the maximum rank of $S_b$. Because our approach is supervised, in order for a fair comparison, we implement DLE as a supervised method, \ie, construct the data graph only using labeled data points. Following \cite{traceratioTNN09}, for trLDA and our \mna method, we empirically select the reduced dimensionality to be $3c$. Once the projection matrix is obtained by the dimensionality reduction methods, \knnf method ($k=1$ is used in this work) is used to classify the unlabeled data points in the projected space. In \knna, we use the most widely used Euclidean distance.
We implement SVM by LIBSVM\footnote{\url{http://www.csie.ntu.edu.tw/~cjlin/libsvm/}} package, in which Gaussian kernel (\ie, $\mck\spr{\xcol[i], \xcol[j]} = \exp\spr{-\gamma\normv{\xcol[i] - \xcol[j]}^2}$) is used and the parameters $\gamma$ and $C$ are fine tuned by searching the range of $\sbr{10^{-5}, 10^{-4}, \dots, 10^{4}, 10^{5}}$.


\Heading{Results.}
Because the main advantage of the proposed \mna method is its robustness against noise, we evaluate it on noisy data and study the impact of noise on the classification performance of the proposed method. We conduct standard 5-fold cross validations by all compared methods on every data set.

We randomly pick a certain percentage of the training data in each of the 5 trials, and give them incorrect labels to emulate noise.
We vary the amount of noise imposed in training data in each of the 5 trials of 5-fold cross validation, and show the average classification accuracies of the compared methods on all nine data sets in \fgref{fig:acc_var}.
As can be seen, the performances of the proposed \mna method with respect to noise on all data sets are considerably stable, whereas those of all other methods drop quickly with the increases of noise. These results confirm the robustness of the proposed \mna method when input data are corrupted by outliers.


\section{Conclusions}
We proposed a robust LDA method based on $\ell_{1,2}$-norm ratio minimization which imposes the $\ell_1$-norm between data points and makes the method more robust to outliers. However, the new objective brings the much more challenging optimization problem than the traditional one. We introduced an efficient algorithm to solve the challenging $\ell_{1,2}$-norm ratio minimization problem and provided the rigorous theoretical analysis on the convergence of our algorithm. The new algorithm is easily to be implemented and fast to converge in practice as we have closed form solution in each iteration. We performed extensive experiments on both synthetic data and real data, and all results clearly shown that the proposed method is more effective and robust to outliers than traditional methods.


%
%

%

\section*{Acknowledgment}

This research was partially supported by NSF IIS-1117965, IIS-1302675, IIS-1344152.



%

\bibliographystyle{plainnat}
\bibliography{rlda_pami_arxiv}

\begin{thebibliography}{10}
\providecommand{\url}[1]{#1}
\csname url@samestyle\endcsname
\providecommand{\newblock}{\relax}
\providecommand{\bibinfo}[2]{#2}
\providecommand{\BIBentrySTDinterwordspacing}{\spaceskip=0pt\relax}
\providecommand{\BIBentryALTinterwordstretchfactor}{4}
\providecommand{\BIBentryALTinterwordspacing}{\spaceskip=\fontdimen2\font plus
\BIBentryALTinterwordstretchfactor\fontdimen3\font minus
  \fontdimen4\font\relax}
\providecommand{\BIBforeignlanguage}[2]{{%
\expandafter\ifx\csname l@#1\endcsname\relax
\typeout{** WARNING: IEEEtran.bst: No hyphenation pattern has been}%
\typeout{** loaded for the language `#1'. Using the pattern for}%
\typeout{** the default language instead.}%
\else
\language=\csname l@#1\endcsname
\fi
#2}}
\providecommand{\BIBdecl}{\relax}
\BIBdecl

\bibitem{Chu10PR}
D.~Chu and G.~S. Thye, ``A new and fast implementation for null space based
  linear discriminant analysis,'' \emph{Pattern Recognition}, vol. 43(4), pp.
  1373--1379, 2010.

\bibitem{GLDAyeJMLR05}
J.~Ye, ``Characterization of a family of algorithms for generalized
  discriminant analysis on undersampled problems,'' \emph{Journal of Machine
  Learning Research}, vol.~6, pp. 483--502, 2005.

\bibitem{Ye08JMLR}
J.~Ye, S.~Ji, and J.~Chen, ``Multi-class discriminant kernel learning via
  convex programming,'' \emph{The Journal of Machine Learning Research},
  vol.~9, pp. 719--758, 2008.

\bibitem{Ye06JMLR}
J.~Ye and T.~Xiong, ``Computational and theoretical analysis of null space and
  orthogonal linear discriminant analysis,'' \emph{Journal of Machine Learning
  Research}, vol.~7, pp. 581--599, 2006.

\bibitem{Ye07ICML}
J.~Ye, ``Least squares linear discriminant analysis,'' \emph{Proceedings of the
  24th international conference on Machine learning}, pp. 1087--1093, 2007.

\bibitem{Choo09SDM}
J.~Choo, B.~L. Drake, and H.~Park, ``Hierarchical linear discriminant analysis
  for beamforming,'' \emph{Siam Data Mining (SDM)}, pp. 894--905, 2009.

\bibitem{Kim06TKDE}
H.~Kim, B.~L. Drake, and H.~Park, ``Adaptive nonlinear discriminant analysis by
  regularized minimum squared errors,'' \emph{IEEE Trans. Knowl. Data Eng.},
  vol. 18(5), pp. 603--612, 2006.

\bibitem{Park08PR}
C.~H. Park and H.~Park, ``A comparison of generalized linear discriminant
  analysis algorithms,'' \emph{Pattern Recognition}, vol. 41(3), pp.
  1083--1097, 2008.

\bibitem{nie2019submanifold}
F.~Nie, Z.~Wang, R.~Wang, and X.~Li, ``Submanifold-preserving discriminant
  analysis with an auto-optimized graph,'' \emph{IEEE transactions on
  cybernetics}, 2019.

\bibitem{WangTraceRatio}
H.~Wang, S.~Yan, D.~Xu, X.~Tang, and T.~S. Huang, ``Trace ratio vs. ratio trace
  for dimensionality reduction,'' in \emph{CVPR}, 2007.

\bibitem{traceratioTNN09}
Y.~Jia, F.~Nie, and C.~Zhang, ``Trace ratio problem revisited,'' \emph{IEEE
  Transactions on Neural Networks}, vol.~20, no.~4, pp. 729--735, 2009.

\bibitem{Robust81}
P.~J. Huber, \emph{Robust Statistics}.\hskip 1em plus 0.5em minus 0.4em\relax
  Wiley, 1981.

\bibitem{FidlerRLDA}
S.~Fidler and A.~Leonardis, ``Robust lda classification by subsampling,''
  \emph{Conference on Computer Vision and Pattern Recognition Workshop}, p.~97,
  2003.

\bibitem{KimRLDA}
S.~Kim, A.~Magnani, and S.~Boyd, ``Robust fisher discriminant analysis,''
  \emph{Advances in Neural Information Processing Systems}, p.~97, 2006.

\bibitem{CrouxRLDA}
C.~Croux, P.~Filzmoser, and K.~Joossens, ``Classification efficiencies for
  robust linear discriminant analysis,'' \emph{Statistica Sinica}, pp.
  581--599, 2008.

\bibitem{zhao2018new}
H.~Zhao, Z.~Wang, and F.~Nie, ``A new formulation of linear discriminant
  analysis for robust dimensionality reduction,'' \emph{IEEE Transactions on
  Knowledge and Data Engineering}, vol.~31, no.~4, pp. 629--640, 2018.

\bibitem{Baccini96}
A.~Baccini, P.~Besse, and A.~de~Faguerolles, ``A l1-norm pca and heuristic
  approach,'' \emph{International Conference on Ordinal and Symbolic Data
  Analysis}, pp. 359--368, 1996.

\bibitem{Gao08PCA}
J.~Gao, ``Robust l1 principal component analysis and its bayesian variational
  inference,'' \emph{Neural Computation}, vol.~20, pp. 555--572, 2008.

\bibitem{Ke:L1PCA}
Q.~Ke and T.~Kanade, ``Robust l1 norm factorization in the presence of outliers
  and missing data by alternative convex programming,'' \emph{IEEE Conf.
  Computer Vision and Pattern Recognition}, pp. 592--599, 2004.

\bibitem{R1PCA}
C.~Ding, D.~Zhou, X.~He, and H.~Zha, ``R1-pca: Rotational invariant l1-norm
  principal component analysis for robust subspace factorization,'' \emph{Int'l
  Conf. Machine Learning}, 2006.

\bibitem{Kwak08PAMI}
N.~Kwak, ``Principal component analysis based on l1-norm maximization,''
  \emph{IEEE Transactions on Pattern Analysis and Machine Intelligence},
  vol.~30, pp. 1672--1680, 2008.

\bibitem{Wright09NIPSRPCA}
J.~Wright, A.~Ganesh, S.~Rao, Y.~Peng, and Y.~Ma, ``Robust principal component
  analysis: Exact recovery of corrupted,'' \emph{Advances in Neural Information
  Processing Systems}, p. 116, 2009.

\bibitem{rpcaICML14}
F.~Nie, J.~Yuan, and H.~Huang, ``Optimal mean robust principal component
  analysis,'' in \emph{Proceedings of the 31st International Conference on
  Machine Learning (ICML)}, 2014, pp. 1062--1070.

\bibitem{li2015robust}
C.-N. Li, Y.-H. Shao, and N.-Y. Deng, ``Robust l1-norm two-dimensional linear
  discriminant analysis,'' \emph{Neural Networks}, vol.~65, pp. 92--104, 2015.

\bibitem{ye2016l1}
Q.~Ye, J.~Yang, F.~Liu, C.~Zhao, N.~Ye, and T.~Yin, ``L1-norm distance linear
  discriminant analysis based on an effective iterative algorithm,'' \emph{IEEE
  Transactions on Circuits and Systems for Video Technology}, vol.~28, no.~1,
  pp. 114--129, 2016.

\bibitem{liu2016non}
Y.~Liu, Q.~Gao, S.~Miao, X.~Gao, F.~Nie, and Y.~Li, ``A non-greedy algorithm
  for l1-norm lda,'' \emph{IEEE Transactions on Image Processing}, vol.~26,
  no.~2, pp. 684--695, 2016.

\bibitem{zhong2013linear}
F.~Zhong and J.~Zhang, ``Linear discriminant analysis based on l1-norm
  maximization,'' \emph{IEEE Transactions on Image Processing}, vol.~22, no.~8,
  pp. 3018--3027, 2013.

\bibitem{RFSnips10}
F.~Nie, H.~Huang, X.~Cai, and C.~Ding, ``Efficient and robust feature selection
  via joint $\ell_{2,1}$-norms minimization,'' in \emph{NIPS}, 2010.

\bibitem{ATT:Data}
Http://www.cl.cam.ac.uk/research/dtg/attarchive/\\facedatabase.html.

\bibitem{GeBeKr01}
A.~Georghiades, P.~Belhumeur, and D.~Kriegman, ``From few to many: Illumination
  cone models for face recognition under variable lighting and pose,''
  \emph{IEEE Trans. Pattern Anal. Mach. Intelligence}, vol.~23, no.~6, pp.
  643--660, 2001.

\bibitem{pie}
T.~Sim and S.~Baker, ``The cmu pose, illumination, and expression database,''
  \emph{IEEE Transactions on PAMI}, vol.~25, no.~12, pp. 1615--1617, 2003.

\bibitem{fukunaga1990introduction}
K.~Fukunaga, \emph{{Introduction to statistical pattern recognition}}.\hskip
  1em plus 0.5em minus 0.4em\relax Academic Press, 1990.

\bibitem{he2003lpp}
X.~He and P.~Niyogi, ``{Locality Preserving Projections},'' in \emph{NIPS},
  2003.

\bibitem{li2003mmc}
H.~Li, T.~Jiang, and K.~Zhang, ``Efficient and robust feature extraction by
  maximum margin criterion,'' in \emph{NIPS}, 2004.

\bibitem{wang2010discriminant}
H.~Wang, H.~Huang, and C.~Ding, ``{Discriminant Laplacian Embedding},'' in
  \emph{AAAI}, 2010.

\end{thebibliography}

%

%
%
%
%




\end{document}